\renewcommand{\citealp}[1]{\citet{#1}}
\crefname{theorem}{Theorem}{Theorems}
\crefname{lemma}{Lemma}{Lemmas}
\crefname{corollary}{Corollary}{Corollaries}
\crefname{definition}{Definition}{Definitions}
\crefname{assumption}{Assumption}{Assumptions}
\crefname{remark}{Remark}{Remarks}
\crefname{figure}{Figure}{Figures}
\crefname{algocf}{Algorithm}{Algorithms}
\newtheorem{theorem}{Theorem}[section]
\newtheorem{lemma}[theorem]{Lemma}
\newtheorem{corollary}[theorem]{Corollary}
\newtheorem{remark}[theorem]{Remark}
\theoremstyle{definition}
\theoremstyle{definition}
\newcommand{\cH}{\mathcal{H}}
\newcommand{\bE}{\mathbb{E}}
\newcommand{\bR}{\mathbb{R}}
\newcommand{\bI}{\mathbb{I}}
\newcommand{\Reals}{\bR}
\newcommand{\ExpThree}{\texttt{Exp3}\xspace}
\newcommand{\adaexp}{\texttt{DAda-\ExpThree}\xspace}
\newcommand{\dedaexp}{\texttt{DeDa-\ExpThree}\xspace}
\newcommand{\EE}[1]{\bE\left[ #1\right]}
\newcommand{\I}[1]{\bI\left[ #1\right]}
\newcommand{\tl}{\tilde}
\newcommand{\dt}{d}
\newcommand{\tdt}{\tl{\dt}}
\newcommand{\dtau}{\tl{\tau}}
\newcommand{\dD}{\tl{D}}
\newcommand{\Lbck}{L^{\textnormal{bck}}}
\newcommand{\lbck}{\ell^{\textnormal{bck}}}
\newcommand{\lfwd}{\ell^{\textnormal{fwd}}}
\newcommand{\hell}{\hat{\ell}}
\newcommand{\tell}{\tl{\ell}}
\title{Adapting to Delays and Data in Adversarial Multi-Armed Bandits}
\author{%
  Andr\'as Gy\"orgy \ \ \ \ \ Pooria Joulani \\
  \\
  \small DeepMind, London, UK \\
  \small \nolinkurl{{agyorgy,pjoulani}@google.com}
}
\date{} 
\begin{document}

\maketitle

\begin{abstract}
     We consider the adversarial multi-armed bandit problem under delayed feedback. We analyze variants of the \ExpThree algorithm that tune their step-size using only information (about the losses and delays) available at the time of the decisions, and obtain regret guarantees that adapt to the observed (rather than the worst-case) sequences of delays and/or losses. First, through a remarkably simple proof technique, we show that with proper tuning of the step size, the algorithm achieves an optimal (up to logarithmic factors) regret of order $\sqrt{\log(K)(TK + D)}$ both in expectation and in high probability, where $K$ is the number of arms, $T$ is the time horizon, and $D$ is the cumulative delay. The high-probability version of the bound, which is the first high-probability delay-adaptive bound in the literature, crucially depends on the use of implicit exploration in estimating the losses. Then, following \citet{ZiSe19}, we extend these results so that the algorithm can ``skip'' rounds with large delays, resulting in regret bounds of order $\sqrt{TK\log(K)} + |R| + \sqrt{D_{\bar{R}}\log(K)}$, where $R$ is an arbitrary set of rounds (which are skipped) and $D_{\bar{R}}$ is the cumulative delay of the feedback for other rounds. Finally, we present another, data-adaptive (AdaGrad-style) version of the algorithm for which the regret adapts to the observed (delayed) losses instead of only adapting to the cumulative delay (this algorithm requires an a priori upper bound on the maximum delay, or the advance knowledge of the delay for each decision when it is made). The resulting bound can be orders of magnitude smaller on benign problems, and it can be shown that the delay only affects the regret through the loss of the best arm.
\end{abstract}

\section{Introduction}

The multi-armed bandit problem is a canonical model for sequential decision making with limited feedback. In this model a learner makes a sequence of actions. After every action, the learner immediately observes the loss corresponding to its action. On the other hand, in practical applications of bandit algorithms, such as in recommender systems or display advertisement, the loss feedback to the algorithm may be severely delayed, as, for example, the system needs to serve several other users before a user's feedback becomes available.
For this reason, several papers in the literature considered a delayed version of this problem.

In this paper we consider the adversarial version of the bandit problem with an oblivious adversary. Given a set of $K$ actions and a time horizon $T$, it is well known that the worst-case regret achievable by a learner is of order $\sqrt{KT}$ \citep{AuBu09}. The delayed setting was perhaps first considered by \citet{NeGyAnSze10,NeGySzA14}, who showed that in case every feedback is delayed by a constant $\dt$, the \ExpThree algorithm \citep{ACFS:2002} achieves a regret of $O(\sqrt{dKT\log(K)})$. This result was extended to the general partial monitoring case by \citet{Joulani:2013}. The next important step was made by \citet{CeGeMa19}, who showed that the effect of the delay and the number of arms is in fact not intertwined: they proved that the worst-case regret is at least $\Omega(\max\{\sqrt{KT},\sqrt{dT\log(K)}\})$ (for $d \le T/\log K$), and that the \ExpThree algorithm achieves this (up to a logarithmic factor). These bounds show that, at least in the case of fixed delays, it is possible to achieve a regret that scales with the cumulative delay $D=dT$. In the full information case, \citet{QuKh15} were the first to show that it is possible to achieve a regret that scales with the cumulative delay (defined as the sum of the delays) in case of non-uniform delays, that is, when the delay for the different time steps can be different, and showed an optimal regret of order $O(\sqrt{(D+T)\log(K)})$, where $D$ is the sum of the (arbitrary, not necessarily equal) delays. This result was strengthened by \citet{JoulaniGS16}, who showed that this can be done in a fully adaptive way, without prior knowledge on the delays, and without resorting to the doubling trick. 

Thinking along similar lines, \citet{CeGeMa19} posed the question of whether a regret growing with the cumulative delay is achievable for arbitrary delays in the bandit case, more precisely, if a $\sqrt{KT}+\sqrt{D\log(K)}$ regret is achievable. 
Recently, \citet{thune19} gave an algorithm which achieves essentially the same bound but with an oracle tuning depending on the cumulative delay $D$, or with the advance knowledge of the delays at the time of the action.
At the same time, \citet{delayed_exp3_wrong} tried to avoid the advance knowledge of the delays through a doubling trick, but obtained a regret bound with a sub-optimal dependence on $K$ (i.e, $K$ rather than $\sqrt{K}$).%
\footnote{\label{fn:difficulty}From a technical perspective, the difficulties in the works of \citet{CeGeMa19,thune19,delayed_exp3_wrong}, arise because the analysis technique they adopt requires bounding a hard-to-control ``drift'' term. \citet{thune19} control this through a bound that requires the step-size to be diminished using the knowledge of the total or upcoming delay (this is also used by \citealp{CeGeMa19} as they consider the case of fixed, known delays), while \citet{delayed_exp3_wrong} also need to bound a similar term
(cf. the derivation of their Eq.~36),
and end up requiring either an oracle tuning (similar to \citealp{CeGeMa19,thune19}), or a doubling trick.
}

More recently, \citet{ZiSe19} achieved an optimal $O(\sqrt{KT}+\sqrt{D\log(K)})$ bound (optimal in terms of the cumulative delay $D$) with an anytime algorithm that requires no advance knowledge about the delays. On the other hand, \citet{thune19} pointed out that the scaling of the regret with the cumulative delay $D$ can be quite pessimistic in certain cases (e.g., if the feedback of the first round is missing until the very end but no other feedback is delayed, the resulting cumulative delay is $D=T$, which seems an unreasonably large price to pay), and proposed to ``skip'' rounds with excessive delays, and aim for  regret bounds where the $\sqrt{D\log(K)}$ term is replaced with $|R| + \sqrt{D_{\bar{R}}\log(K)}$, where $R$ is an arbitrary set of rounds (which are skipped) and $D_{\bar{R}}$ is the cumulative delay of the feedback for other rounds. While they achieved this bound only using the knowledge of the delay for every prediction made, the method of  \citet{ZiSe19} achieves this goal only assuming that the delays become known when the feedback arrives.

The analysis of \citet{ZiSe19} (like ours) uses a follow-the-regularized-leader (FTRL) approach, but like the other papers mentioned above, requires specializing the FTRL analysis (respectively, the analysis of \ExpThree in the works of \citealp{CeGeMa19,thune19,delayed_exp3_wrong}) to handle the effect of delays on the updates, and hence repeating the main analysis steps from scratch. In addition, their modified FTRL analysis is specialized to a relatively complicated regularizer to avoid the $\sqrt{\log(K)}$ term in the \ExpThree bound (hence their update cannot be computed in closed form), leaving the simple \ExpThree case unattended.

\subsection{Contributions} 

In this paper we are concerned with similar, fully delay-adaptive methods, based on different versions of the \ExpThree algorithm, and derive several novel results for the adversarial bandit problem with delayed feedback:
\begin{itemize}
\item Using a remarkably simple proof technique, we derive the first proper step-size tuning of the delayed \ExpThree algorithm, called the Delay-Adaptive \ExpThree (\adaexp) algorithm, which only uses information available to the algorithm at the time of each decision, and achieves the optimal (up to a logarithmic factor) regret rate $\sqrt{\log(K)(KT + D)}$ (Section~\ref{sec:adaexp}). Compared to the results of \citet{ZiSe19}, our bounds are a logarithmic factor worse, which is due to the fact that our method is based on the simpler \ExpThree algorithm. In return, our analysis is much simpler.  
\item Combined with the implicit exploration technique of \citet{neu2015explore} in estimating the losses, we also derive a version of \ExpThree that achieves the \emph{first fully delay-adaptive high-probability regret bound} in the literature (Section~\ref{sec:highprob}).
While the latter bound also depends on the maximum delay $d^\star=\max_{t\in[T]} d_t$, this can be avoided by using the skipping technique \citep{ZiSe19}: for any set of time steps $R$ to be skipped, the resulting variants of the above \ExpThree algorithms achieve the optimal (up to a logarithmic factor) regret of order $\sqrt{KT\log(K)}+|R| + \sqrt{D_{\bar{R}}\log(K)}$ in expectation and with high probability, respectively (Section~\ref{sec:skipping}).
\item The performance of a learning algorithm can be significantly better than the minimax regret for nice problem instances. To take advantage of such situations in the delayed case, we develop a new version of the \adaexp algorithm, called Delay- and Data-Adaptive \ExpThree (\dedaexp), which is the first algorithm for the delayed setting whose expected regret scales with the actual (rather than the worst-case possible) losses, also improving our  bound for \adaexp (Section~\ref{sec:deda}). \dedaexp is based on a combination of our analysis technique introduced for \adaexp and the data- and delay-adaptive full-information algorithm of \citet{JoulaniGS16}. As a simple example of the resulting bounds, the algorithm achieves a regret of order $d^\star+\sqrt{\log(K) \left(d^\star L_{T,A^*} + \sum_{i=1}^K L_{T,i}\right)}$, where $L_{T,i}$ denotes the cumulative loss of action $i$ and $A^*$ denotes the optimal arm in $T$ time steps. This bound is essentially the same as the best data-dependent bound for \ExpThree (of order $\sqrt{\log(K)\sum_{i=1}^K L_{T,i}}$, as follows by \citealp{neu15}) and some extra delay term, where the effect of the delay depends only on the loss of the best arm but not of the other arms. 
\end{itemize}

On the technical side, the novelty in our analysis can be summarized as follows:
\begin{itemize}
\item We provide a direct reduction from the regret of the delayed-feedback bandit problem to that of a non-delayed (full-information) problem. As such, in contrast to previous work, our analysis does not need to modify the proof of the basic non-delayed exponential-weights algorithm; instead, we only need to bound the ``drift term'' arising from the reduction. 
Such a reduction has proved beneficial in the full-information setting \citep{JoulaniGS16}, but so far has not been found in the bandit setting \citep[Section 1]{ZiSe19}, partially because the delays change the order in which the losses are observed \citep[Appendix A]{thune19}. In addition to considerably simplifying the analysis of \adaexp (e.g., compared to what could be obtained for \ExpThree following the proof technique of \citet{ZiSe19}), this reduction is crucial for adopting the technique of \citet{JoulaniGS16} to the bandit setting and obtaining the data-adaptive bound for \dedaexp.
\item In addition, the drift term arising from this reduction is considerably easier to control than previous work, side-stepping the difficulties in the works of \citet{CeGeMa19,thune19,delayed_exp3_wrong} as mentioned above and in Footnote~\ref{fn:difficulty}.
\end{itemize}

\subsection{Notation}
We denote the set $\{1,2,\dots,n\}$ of the first $n$ natural numbers by $[n]$. The indicator of an event $\mathcal{E}$ is denoted by $\I{\mathcal{E}}$, taking the value $1$ if the event $\mathcal{E}$ happens and $0$ otherwise. For a sequence of functions, vectors, or scalars $a_s, a_{s+1}, \dots, a_t$, we use $a_{s:t}$ to denote the sum $\sum_{n=s}^{t} a_n$, with $a_{s:t} = 0$ if $s>t$.

\section{Problem formulation}
\label{sec:problem}

The multi-armed bandit problem is a sequential decision problem. Given a finite set of $K$ actions, denoted by $[K]=\{1,\ldots,K\}$, and the time horizon of the problem $T$, in every time step $t \in [T]$, the learner chooses an action $A_t \in [K]$ and suffers a loss $\ell_{t,A_t}$, where $\ell_t \in [0,1]^K$ is a loss vector such that $\ell_{t,i}$ is the loss associated with choosing action $i$ in time step $t$. We assume that the loss sequence $(\ell_t)_t$ is selected in advance and is not affected by the actions chosen by the learner (a.k.a.\ the oblivious setting).
As usual, we allow the learner to randomize, that is, at time step $t$ the learner determines a distribution $p_t$ in the $K-1$ dimensional probability simplex, and samples action $A_t$ from $p_t$ (conditionally independently of previous random choices, given $p_t$). With a slight abuse of terminology, sometimes we will refer to both $p_t$ and $A_t$ as the \emph{decision} of the learner at time $t$.

The learner's performance relative to any fixed action $A^\star$ is measured by the (expected) regret against $A^\star$, defined as
\[
R_T(A^\star) = \sum_{t=1}^T \EE{\ell_{t,A_t}} - \sum_{t=1}^T \ell_{t,A^*},
\]
and the learner aims to minimize its regret $R_T = \min_{A^* \in [K]} R_T(A^\star)$ against the best action in hindsight.

In the standard multi-armed bandit setting, after taking an action $A_t$, the learner immediately observes $\ell_{t,A_t}$, which can be used to improve its decisions in future time steps; however, the learner does not observe any loss $\ell_{t,i}$ for $i \neq A_t$.
In the \emph{delayed-feedback} setting we consider, the situation is somewhat different: after taking an action $A_t$ in time step $t$, the learner observes the loss $\ell_{t, A_t}$ only after a delay of $d_t$ time steps, after making a decision in time step $t+d_t$. 
This means that the decision $A_t$ in time step $t$ can only depend on the feedback which arrives before that time step, that is, on the losses $\{\ell_{s,A_s} : s+\dt_s <t\}$. Note that delay $\dt_t=0$ means that the corresponding feedback becomes available immediately after a decision is made. Without loss of generality%
\footnote{This is because the actions $A_1, \dots, A_T$ that determine the regret $R_T$ only depend on the feedback that arrives before time step $T$; any remaining feedback can thus be assumed to arrive at the end of time $T$ without affecting $R_T$.
}%
, we assume that all feedback arrives at the end of time step $T$, that is, $t+d_t \le T$ for all $t \in [T]$. We assume that the sequence of delays $(d_t)_t$ is selected before the process starts, obliviously to the actions of the learner. Note, however, that the losses and delays can be selected jointly with an arbitrary dependence among them.

\paragraph{Definitions.} The following definitions will be useful in analyzing the regret of delayed algorithms.
We use $I_{t,i} = \I{A_t = i}$ to indicate whether action $i \in [K]$ is played at time $t$.
The set of time steps with feedback missing when computing $p_t$ is denoted by $O_t = \{s \in [t-1]  :  s + \dt_s \ge t \}$ (with $O_{1} = \emptyset$). The number of missing feedbacks at time step $t$ is $\tau_t = | O_t | = \sum_{s=1}^{t-1} \I{s+\dt_s \ge t}$. The set of time steps where the feedback for time step $t$ is missing is denoted by $D_t = \{s: t \in O_s\}
= \{s: t < s \le t+\dt_t \}$. Note that the size of this set is $|D_t|=\dt_t$.
We denote the maximum delay by $d^\star=\max_{t \in [T]} d_t$ and the cumulative delay by $D=\sum_{t=1}^T d_t$. Note that
$D=\sum_{t=1}^T \tau_t$, since $D=\sum_{t=1}^T \sum_{s=1}^T \I{t \in O_s} = \sum_{s=1}^T \sum_{t=1}^T \I{t \in O_s} = \sum_{s=1}^T \tau_s$.

\paragraph{Loss estimates.} Standard bandit algorithms form some estimate $\hell_t$ of the loss vector $\ell_t$ when the feedback $\ell_{t,A_t}$ is received. A standard estimate is the importance-weighted estimator \citep{ACFS:2002}, defined as
\begin{align}
    \label{eq:ellhat1}
    \hell_{t,i} = \frac{\ell_{t,i} \I{A_t = i}}{p_{t,i}} = \frac{\ell_{t,A_t}I_{t,i}}{p_{t,i}}
    ,
    \qquad
    i \in [K]
\end{align}
Note that $\hell_{t,i}$ is computable when feedback $\ell_{t,A_t}$ has arrived as it does not depend on any other component of $\ell_t$.
Let  $\cH_{t} = \{(s, A_s, \hell_s) : s \in [t-1] \setminus O_t \}$ denote the history of the actual observations when computing $p_t$.
The estimator \eqref{eq:ellhat1} is unbiased as $\EE{\hell_t| p_t}= \ell_t$.
We consider learning algorithms whose decision $p_t$ depends on $\cH_t$, that is, $p_t$ is $\sigma(\cH_t)$-measurable (where $\sigma(\cH_t)$ denotes the $\sigma$-field generated by $\cH_t$).\footnote{Note, however, that this is a restriction, as the computation of $p_t$ could also depends on past decisions with missing feedback, that is, on $\{(p_s,A_s): s \in O_t\}$.}
Therefore, for the estimate $\hell_t$ in \eqref{eq:ellhat1} we have $\EE{\hell_t| \cH_t}= \ell_t$. 
In addition, we also consider loss estimates with the so-called implicit exploration (see, e.g., \citet{neu2015explore}):
\begin{align}
    \label{eq:ellhat-IX}
    \hell_{t,i} = \frac{\ell_{t,i} I_{t,i}}{p_{t,i} + \gamma_t}
    \,
    ,
\end{align}
where $\gamma_t \in \sigma(\cH_{t}), t \in [T]$ is a non-negative sequence of reals. The $\sigma(\cH_{t})$-measurability of $\gamma_t$ ensures that
$\EE{\hell_t| \cH_t} \le \ell_t$, thus encouraging exploration by reducing the observed expected loss.

\section{The Delay-Adaptive \ExpThree Algorithm}
\label{sec:adaexp}

Probably the simplest way to extend an algorithm designed for the non-delayed case to the delayed-feedback setting is to apply the same algorithm to the available losses. Such algorithms have been used extensively in the literature \citep[see, e.g.,][]{Joulani:2013,CeGeMa19}. In this section we analyze a similar extension of the \ExpThree algorithm, which we call the delay-adaptive \ExpThree (\adaexp) algorithm.
Wile at time $t$, in the non-delayed case, \ExpThree selects action $i$ with probability proportional to $e^{-\eta_t \hat{L}_{t,i}}$ for some step-size $\eta_t>0$, where $\hat{L}_t = \sum_{s=1}^{t-1} \hell_s$,\footnote{Note that, perhaps unusually, $\hat{L}_t$ is the sum of losses up to time step $t-1$, not $t$.} in the delayed case the set of available loss estimates is potentially smaller, and the decision is made based on $\hat{L}_t - \hat{\Delta}_t$, where $\hat{\Delta}_t= \sum_{s \in O_t} \hell_s$ is the sum of the missing loss estimates, which have not arrived, but would have arrived in the non-delayed setting.

Thus, at time $t$, \adaexp samples action $i$ with probability
\begin{align}
    p_{t,i} = \frac{\exp\left(-\eta_t (\hat{L}_{t,i} - \hat{\Delta}_{t,i}) \right)}{\sum_{j \in [K]} \exp\left(-\eta_t (\hat{L}_{t,j} - \hat{\Delta}_{t,i}) \right)}
    \, ,
    \label{eq:pt-def}
\end{align}
where $\eta_t>0$ is $\sigma(\cH_{t})$-measurable (i.e., $\eta_t$ may depend on any feedback information available at the beginning of time step $t$). 
\adaexp adapts to the delays by properly tuning the step-size $\eta_t$. In fact, this step-size tuning is the key contribution in the algorithm design.

The next result gives an upper bound on the expected regret of \adaexp.
\begin{theorem}
\label{thm:simple}
Suppose that the losses are in $[0,1]$, and $\eta_1, \eta_2, \dots, \eta_T$ is a positive, non-increasing sequence of step-sizes. Then, for all $A^\star \in [K]$, \adaexp satisfies
\begin{align}
    R_T(A^\star)
    \le 
    \EE{\eta_T^{-1}} \log(K) 
    + \sum_{t=1}^{T} \min\{1,\EE{\eta_t (\tau_t + K)}\}
    \, .
    \label{eq:final-regret-bound}
\end{align}
\end{theorem}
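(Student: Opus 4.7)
The plan is a direct reduction from the delayed-feedback regret to a standard non-delayed exponential-weights analysis, with a separately controlled ``drift'' term. I introduce the \emph{virtual} distribution $q_t \propto \exp(-\eta_t \hat{L}_t)$---the exponential-weights distribution that would be played if all past loss estimates (including those still in delay) were available at time $t$---and use unbiasedness of the importance-weighted estimator conditional on $p_t$ to write
\begin{equation*}
R_T(A^\star) = \EE{\sum_{t=1}^T \langle q_t - e_{A^\star}, \hell_t\rangle} + \EE{\sum_{t=1}^T \langle p_t - q_t, \hell_t\rangle}.
\end{equation*}
The first sum is the ``virtual'' non-delayed regret, to which the classical \ExpThree analysis applies; the second is the \emph{drift} induced by the delays. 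This is the ``direct reduction'' promised in the introduction and is the key conceptual step that allows using an off-the-shelf non-delayed bound.

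For the virtual regret, I would invoke the standard FTRL bound for exponential weights with non-increasing step-sizes, yielding $\sum_t \langle q_t - e_{A^\star}, \hell_t\rangle \le \eta_T^{-1}\log K + \tfrac12 \sum_t \eta_t \sum_i q_{t,i}\hell_{t,i}^2$. After taking expectation, and using the importance-weighted sampling structure ($A_t \sim p_t$, $\ell_t \in [0,1]^K$), the per-round stability should contribute $\EE{\eta_t K}$.

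For the drift, the key identity is $p_{t,i} = q_{t,i} e^{\eta_t \hat{\Delta}_{t,i}}/Z_t$ with normalizer $Z_t := \langle q_t, e^{\eta_t \hat{\Delta}_t}\rangle \ge 1$ (since $\hat{\Delta}_t \ge 0$). Combined with the pointwise inequality $e^x \le 1 + x e^x$ for $x \ge 0$ and $\ell_t \in [0,1]^K$, this gives the one-step bound
\begin{equation*}
\langle p_t - q_t, \ell_t\rangle \le \eta_t \langle p_t, \hat{\Delta}_t\rangle.
\end{equation*}
To take expectations, I would observe that for each $s \in O_t$ the action $A_s$ is \emph{not} in $\cH_t$ (its feedback has not arrived by time $t$), so $A_s$ is independent of $\cH_t$ given $p_s$; hence $\EE{\langle p_t, \hell_s\rangle \mid \cH_t} = \langle p_t, \ell_s\rangle \le 1$, giving $\EE{\eta_t \langle p_t, \hat{\Delta}_t\rangle} \le \EE{\eta_t \tau_t}$.

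Summing the two contributions yields $R_T(A^\star) \le \EE{\eta_T^{-1}}\log K + \sum_t \EE{\eta_t(\tau_t + K)}$ (up to constants). The $\min\{1,\cdot\}$ per-round form then follows from the trivial observation that the expected per-round instantaneous regret is at most $1$ (since $\ell_t \in [0,1]^K$ and $p_t, e_{A^\star}$ are probability distributions), so each per-round contribution is the minimum of the trivial and the fancy bound. The main obstacle I anticipate is tightly obtaining the virtual-regret stability $\EE{\eta_t K}$: since the sampling distribution $p_t$ differs from the virtual prediction $q_t$, the naive expansion of $\EE{\sum_i q_{t,i}\hell_{t,i}^2}$ produces $\EE{\sum_i q_{t,i}/p_{t,i}}$, which is not a priori bounded by $K$. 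I expect this to be handled by exploiting the tilt identity $q_{t,i} = Z_t p_{t,i} e^{-\eta_t \hat{\Delta}_{t,i}}$ together with $\hat{\Delta}_t \ge 0$, possibly combined with the $\min\{1,\cdot\}$ truncation, in the spirit of the ``remarkably simple proof technique'' highlighted in the introduction.
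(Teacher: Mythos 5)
Your overall reduction (delayed regret equals a virtual non-delayed regret plus a drift term) is the same as the paper's, and your treatment of the drift is essentially correct: the lower bound $q_{t,i}/p_{t,i} \ge e^{-\eta_t\hat{\Delta}_{t,i}} \ge 1-\eta_t\hat{\Delta}_{t,i}$ and the conditional-independence argument giving $\EE{\eta_t\langle p_t,\hat{\Delta}_t\rangle}\le\EE{\eta_t\tau_t}$ both appear in the paper's proof. The genuine gap is exactly the obstacle you flag at the end, and it is not repairable along the lines you suggest. Because you compare against $q_t\propto e^{-\eta_t\hat{L}_t}$ (the standard, non-look-ahead iterate), the virtual regret carries the stability term $\tfrac12\sum_t\eta_t\sum_i q_{t,i}\hell_{t,i}^2$, whose conditional expectation is $\tfrac12\eta_t\sum_i\ell_{t,i}^2\,q_{t,i}/p_{t,i}$. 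The tilt identity gives $q_{t,i}/p_{t,i}=Z_t e^{-\eta_t\hat{\Delta}_{t,i}}$ with $Z_t=\langle q_t,e^{\eta_t\hat{\Delta}_t}\rangle\ge 1$, so $\hat{\Delta}_t\ge 0$ only controls this ratio from \emph{below}; from above, $Z_t$ is an exponential moment of a sum of importance-weighted estimates, and already with a single missing round $s$ one has $\EE{e^{\eta_t\hell_{s,j}}\mid\cH_t}=1-p_{s,j}+p_{s,j}e^{\eta_t\ell_{s,j}/p_{s,j}}$, which blows up as $p_{s,j}\to0$. Hence $\EE{\sum_i q_{t,i}\hell_{t,i}^2}$ is not $O(K)$; this is precisely the ``hard-to-control ratio'' the paper attributes to prior work in its footnote about using $\tl{p}_t$ in place of $\tl{p}_{t+1}$. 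The $\min\{1,\cdot\}$ truncation cannot rescue it either: that truncation is justified only for the drift term, where the per-round summand equals $\hell_t^\top p_t-\hell_t^\top\tl{p}_{t+1}\le\ell_{t,A_t}\le1$, not for a stability term sitting inside the telescoping exponential-weights bound.

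The paper's fix is to compare against the look-ahead (``cheating'') iterate $\tl{p}_{t+1}\propto e^{-\eta_t\hat{L}_{t+1}}$, which already incorporates $\hell_t$. The cheating regret is then bounded by $\eta_T^{-1}\log K$ with \emph{no} second-order term, and the would-be stability term migrates into the drift: the ratio bound becomes $\tl{p}_{t+1,i}/p_{t,i}\ge1-\eta_t\hat{\Delta}_{t,i}-\eta_t\hell_{t,i}$, and the extra piece is multiplied by $\hell_{t,i}p_{t,i}=\ell_{t,i}I_{t,i}$ rather than by $q_{t,i}$, so its expectation is $\EE{\eta_t\ell_{t,i}^2 I_{t,i}/p_{t,i}}=\eta_t\ell_{t,i}^2\le\eta_t$, summing to $\eta_t K$. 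Replacing your $q_t$ by this one-step-ahead iterate (and dropping the standard FTRL stability term in favor of the look-ahead bound) turns your outline into the paper's proof.
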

\begin{proof}
Let $p^\star$ be the probability distribution with all mass on $A^\star$. From the definition of regret and the loss-estimates, for any sequence of probability distributions $\tl{p}_{t+1}, t \in [T]$, we have
\begin{align}
    R_T(A^\star) & =
    \EE{\sum_{t=1}^{T} \hell_{t}^\top (p_t - p^\star)}
    =
    \EE{\sum_{t=1}^{T} \hell_{t}^\top (\tl{p}_{t+1} - p^\star)}
    +
    \EE{\sum_{t=1}^{T} \hell_{t}^\top (p_t - \tl{p}_{t+1})}
    \nonumber
    \\
    &
    =
    \EE{\sum_{t=1}^{T} \hell_{t}^\top (\tl{p}_{t+1} - p^\star)}
    +
    \EE{\sum_{t=1}^{T} \sum_{i=1}^{K} \hell_{t,i} p_{t,i} \left(1 - \frac{\tl{p}_{t+1,i}}{p_{t,i}} \right)}
    \, ,
    \label{eq:regret-decomposition}
\end{align}
where in the first equality we used the fact that $\ell_{t, A_t} = (\ell_{t,A_t} / p_{t, A_t} ) p_{t, A_t} = \hell_{t}^\top p_t$, and  
$\ell_{t, A^\star} = \ell_t^\top p^\star = \EE{\hell_{t}| \cH_t}^\top p^\star$. 
Now, let $\tl{p}_t$ be the (full-information) adaptive exponential-weights updates for the sequence of linear losses $\hell_t$, with non-increasing step-sizes $\eta_{t-1}$, that is, $\tl{p}_{1}$ is the uniform distribution over $[K]$, and for all $t \in [T]$,
\begin{align*}
    \tl{p}_{t+1, i} = \frac{\exp(-\eta_{t} \hat{L}_{t+1,i})}{\sum_{j=1}^{K} \exp(-\eta_t \hat{L}_{t+1,j})}~.
\end{align*}
Note that the index of $\eta$ is shifted by one, that is, $\tl{p}_{t+1}$ uses the same step-size $\eta_t$ that is used by $p_t$ (rather than $\eta_{t+1}$, which is used by $p_{t+1}$). This is not problematic: we only assume the imaginary iterate $\tl{p}_t$ uses slightly outdated information for tuning the step-size, and is still $\sigma(\cH_{t})$-measurable.

Thus, \eqref{eq:regret-decomposition} decomposes the regret into two terms: the first one is the ``cheating'' (or look-ahead) regret for the ideal (imaginary) exponential-weights iterate $\tl{p}_{t+1}$ (which depends on $\hell_t$ at time step $t$), while the second term is a ``drift'' term which measures the effect of using $p_t$ instead of $\tl{p}_{t+1}$.\footnote{
This type of decomposition involving the ``cheating regret'' is well-known in online learning (see, e.g., \citealp{joulani2020modular}). In fact, the non-delayed case can be thought of as the cheating case with a delay of $1$, where $\hell_t$ is not available at the time of computing $\tl{p}_t$, but is available at time $t+1$. In this sense, our decomposition follows naturally by collecting all such delayed losses only in the drift term; the cheating regret term can then be bounded in a black-box manner. This also has a further benefit in the bandit setting: had we used the standard regret (with $\tl{p}_t$ in place of $\tl{p}_{t+1}$), the products $\hell_t^\top \tl{p}_t$ that would show up in the standard regret decomposition would include a ratio $\tl{p}_{t,i} / p_{t,i}$ (due to the importance weight $p_{t,i}$ used in $\hell_{t,i}$), and bounding this ratio from above has been the source of much difficulty in previous work \citep{CeGeMa19,thune19,delayed_exp3_wrong}.
}

The first, cheating regret term can be directly bounded by Theorem~3 of \citet{joulani2020modular}%
\footnote{We have invoked Theorem 3 of \citet{joulani2020modular} with $p_t \equiv 0, t \in [T]$, $r_0 = (1/\eta_{0}) \sum_i p_i \log(p_i)$ and $r_t(p)=(1/\eta_t - 1/\eta_{t-1}) \sum_i p_i \log(p_i)$, $t \in [T]$, and dropped the Bregman-divergence terms due to the convexity of $r_t$.}%
 as
\begin{align}
    {\sum_{t=1}^{T} \hell_{t}^\top (\tl{p}_{t+1} - p^\star)} 
    \le 
    &\
    {\eta_{T}^{-1}} \log(K) 
    \,
    .
    \label{eq:full-info-regret}
\end{align}
To control the ``drift term'', that is, the second term on the right hand side of \eqref{eq:regret-decomposition},
we bound $\EE{\frac{\tl{p}_{t+1,i}}{p_{t,i}}}$ from below.
Observe that since the losses are non-negative, for all $t \in [T]$ and $i \in [K]$, $\hat{L}_{t,i}$, $\hat{\Delta}_{t,i}$ and $\eta_t$ are positive. Hence, we have
\begin{align*}
    \exp{\left(-\eta_t (\hat{L}_{t,i} - \hat{\Delta}_{t,i}) \right)}
    =
    \exp{\left(-\eta_t (\hat{L}_{t+1,i} - \hell_{t, i} - \hat{\Delta}_{t,i}) \right)}
    \ge
    \exp{(- \eta_t \hat{L}_{t+1,i})}
\end{align*}
for all $t \in [T]$ and $i \in [K]$, which implies
\begin{align}
    \frac{\tl{p}_{t+1,i}}{p_{t,i}}
    =
    &\ 
    \frac{\exp(-\eta_t \hat{L}_{t+1,i})}{\exp\left(-\eta_t (\hat{L}_{t,i} - \hat{\Delta}_{t,i}) \right)}
    \cdot
    \frac{\sum_{j \in [K]} \exp\left(-\eta_t (\hat{L}_{t,j} - \hat{\Delta}_{t,j}) \right)}{\sum_{j \in [K]} \exp(-\eta_t \hat{L}_{t+1,j})}
    \nonumber
    \\
    \ge
    &\ 
    \exp\left(-\eta_t \hat{\Delta}_{t,i} - \eta_t \hell_{t,i} \right)
    \ge 1 - \eta_t \hat{\Delta}_{t,i} - \eta_t \hell_{t,i}
    \, ,
    \label{eq:drift-ratio-lower-bound}
\end{align}
using in the last step the fact that $e^{x} \ge 1+x$ for all $x \in \Reals$. Thus, we can use \eqref{eq:drift-ratio-lower-bound} to upper-bound the second expectation on the right-hand-side of \eqref{eq:regret-decomposition} as
\begin{align}
    \EE{\sum_{i=1}^{K} \hell_{t,i}^\top p_{t,i} \left(1 - \frac{\tl{p}_{t+1,i}}{p_{t,i}} \right)} 
  & \le
    \EE{\sum_{i=1}^{K} \hell_{t,i} p_{t,i} \eta_t \hat{\Delta}_{t,i} + \eta_t \hell_{t,i}^2 p_{t,i}
    \label{eq:basic-bound-on-delay-penalty}}
    \\
    &=
    \EE{\sum_{i=1}^{K} \ell_{t,i} I_{t,i} \eta_t \sum_{s \in O_t} \frac{\ell_{s,i}I_{s,i}}{p_{s,i}}} 
    + 
    \EE{ \sum_{i=1}^{K} \eta_t \ell_{t,i}^2 I_{t,i} / p_{t,i}}
    \nonumber
    \\
    &= 
    \EE{ \sum_{i=1}^{K} \sum_{s \in O_t} \ell_{t,i} \ell_{s,i} \EE{\eta_t \frac{I_{t,i} I_{s,i}}{p_{s,i}}  \bigg| \cH_t}}
    \nonumber
    + 
    \EE{ \sum_{i=1}^{K}  \ell_{t,i}^2 \EE{ \frac{\eta_t I_{t,i}}{ p_{t,i} } \Big| \cH_t} }
    \nonumber
    \\
    &=
    \EE{ \sum_{i=1}^{K} \sum_{s \in O_t} \ell_{t,i} \eta_t \ell_{s,i} p_{t,i}}
    + 
    \EE{\sum_{i=1}^{K} \eta_t \ell_{t,i}^2}
    \nonumber
    \\
    &
    \le
    \EE{ \sum_{i=1}^{K} p_{t,i} \eta_t \tau_t}
    + 
    \EE{\eta_t K}
    =
    \EE{\eta_t (\tau_t + K)}
    \, ,
    \label{eq:delay-penalty-bound}
\end{align}
where in the second line we have used the definitions of $\hat{\Delta}_{t,i}$ and $\hell_{t,i}$ for $t=1,2,\dots,T$, and in the third line we have used the tower rule. 
The fourth step follows since $p_{s,i}$ and $p_{t,i}$ are determined by the feedback that is received by time $t$, and since the feedback for $I_{s,i}$ is missing, $I_{s,i}$ and $I_{t,i}$ are conditionally independent given $\cH_t$ with distributions $p_{s,i}$ and $p_{t,i}$, respectively. 
The last inequality uses the assumption that the losses are upper-bounded by $1$.

Combining \eqref{eq:delay-penalty-bound} with
\[
\sum_{i=1}^{K} \hell_{t,i}^\top p_{t,i} \left(1 - \frac{\tl{p}_{t+1,i}}{p_{t,i}} \right)
\le \sum_{i=1}^K \ell_{t,i}\I{A_t = i} = \ell_{t,A_t}
\le 1,
\]
summing up for all $t$ and putting back the resulting bound into \eqref{eq:regret-decomposition}, together with \eqref{eq:full-info-regret}, we get the regret bound \eqref{eq:final-regret-bound}.
\end{proof}

Using the non-increasing step-size sequence  $\eta_t=\sqrt{\frac{\log(K)}{tK + \sum_{s=1}^t \tau_s}}$, we obtain the first fully delay-adaptive bound for the \ExpThree-family of algorithms:
\begin{corollary} 
\label{cor:simple}
With $\eta_t=\sqrt{\frac{\log(K)}{tK + \sum_{s=1}^t \tau_s}}$, the regret of the \adaexp algorithm can be bounded as
\begin{align*}
    R_T
    \le 3\sqrt{\log(K)\left(TK + D \right)}~.
\end{align*}
\end{corollary}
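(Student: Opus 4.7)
The plan is to plug the proposed step-size schedule directly into the bound \eqref{eq:final-regret-bound} of \cref{thm:simple} and then do a standard ``sum of $a_t/\sqrt{S_t}$'' calculation. Observe first that, since the paper assumes the delay sequence $(d_t)$ is fixed before the process begins and independent of the learner's actions, each $\tau_t$ is a deterministic quantity, and so is $\eta_t = \sqrt{\log(K)/(tK + \sum_{s=1}^t \tau_s)}$. Therefore the outer expectations in \eqref{eq:final-regret-bound} vanish and the bound reduces to
\[
R_T \le \eta_T^{-1}\log(K) + \sum_{t=1}^T \min\!\left\{1,\; \eta_t(\tau_t + K)\right\}.
\]

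The first term is immediate from the definition of $\eta_T$ and the identity $D=\sum_{t=1}^T \tau_t$: we have $\eta_T^{-1}\log(K) = \sqrt{\log(K)\,(TK+D)}$, which is already one of the three copies of the target bound.

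For the drift sum, I would simply drop the $\min$ and bound $\sum_{t=1}^T \eta_t(\tau_t+K)$. Setting $a_t = \tau_t + K$ and $S_t = \sum_{s=1}^t a_s = tK + \sum_{s=1}^t \tau_s$, we have $\eta_t(\tau_t+K) = \sqrt{\log(K)}\, a_t/\sqrt{S_t}$, so the sum equals $\sqrt{\log(K)}\sum_{t=1}^T a_t/\sqrt{S_t}$. The standard inequality $\sum_{t=1}^T a_t/\sqrt{S_t}\le 2\sqrt{S_T}$ (valid for any sequence of nonnegative $a_t$ with $S_t$ their partial sums, provable by noting $a_t/\sqrt{S_t}\le 2(\sqrt{S_t}-\sqrt{S_{t-1}})$ since $a_t = S_t - S_{t-1} \le (\sqrt{S_t}+\sqrt{S_{t-1}})(\sqrt{S_t}-\sqrt{S_{t-1}})$ and then telescoping) then gives
\[
\sum_{t=1}^T \eta_t(\tau_t+K) \le 2\sqrt{\log(K)\, S_T} = 2\sqrt{\log(K)\,(TK+D)}.
\]

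Adding the two contributions yields the claimed $3\sqrt{\log(K)(TK+D)}$ bound. There is no real obstacle here; the only mild thing to check is that the summation lemma goes through, which it does for any nonnegative sequence (and with $S_t\ge K>0$ for all $t\ge 1$ so there is no division-by-zero issue at $t=1$). The $\min\{1,\cdot\}$ in \eqref{eq:final-regret-bound} is not needed for this particular tuning, but it is what allows the same theorem to be invoked later when the step-size is larger than $1/(\tau_t+K)$ (e.g., for the skipping variant).
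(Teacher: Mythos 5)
Your proposal is correct and follows essentially the same route as the paper: invoke Theorem~\ref{thm:simple}, drop the $\min$, and apply the standard inequality $\sum_{t=1}^T a_t/\sqrt{\sum_{s=1}^t a_s} \le 2\sqrt{\sum_{t=1}^T a_t}$ with $a_t = K+\tau_t$, then use $D=\sum_{t=1}^T \tau_t$. Your added remark that the oblivious delay sequence makes $\tau_t$ and $\eta_t$ deterministic (so the expectations in \eqref{eq:final-regret-bound} are harmless) is a point the paper leaves implicit, but it does not change the argument.
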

\begin{proof}
The result is a direct corollary of Theorem~\ref{thm:simple}: 
We can bound the
second term on the right hand side of \eqref{eq:final-regret-bound} by
the standard inequality that for 
any $a_t >0$, $\sum_{t=1}^T a_t/\sqrt{\sum_{s=1}^t a_s} \le 2 \sqrt{\sum_{t=1}^T a_t}$ (see, e.g., \citep[Lemma~4]{mcmahan2017survey}). Applying this inequality for $a_t=K + \tau_t$, we obtain
\begin{align*}
    \sum_{t=1}^{T} \left(\EE{\ell_{t,A_t}} - \ell_{t, A^\star}\right)
    \le  3\sqrt{\log(K)\left(TK + \sum_{t=1}^T \tau_t\right)}
    \,
    .
\end{align*}
The statement of the corollary then follows by the fact that $D=\sum_{t=1}^T \tau_t$.
\end{proof}

\section{High-probability bounds}
\label{sec:highprob}
In this section, we show that using the loss estimate with implicit exploration (equation~\ref{eq:ellhat-IX}) enables us to prove a regret bound that hold with high probability instead of holding only in expectation.

\begin{theorem}
\label{thm:simple-high-prob}
Suppose \adaexp is run with a non-increasing step-size sequence $(\eta_t)$ using the loss estimate \eqref{eq:ellhat-IX} with $\gamma_t = \eta_t$. Let $\delta \in (0,1)$ and $A^\star \in [K]$ be an arbitrary action. Then, with probability at least $1- \delta$, the regret of the algorithm against $A^\star$ can be bounded as
\begin{align}
    \sum_{t=1}^{T} \left(\ell_{t,A_t} - \ell_{t, A^\star}\right)
    \le
    \frac{3 \log(K)}
    {2 \eta_{T}}
    +
    \sum_{t=1}^{T}
    \eta_t 
    \left(
    \tau_{t}
    + 
    2
    K
    \right)
    + 
    \left(
    \frac
    {\eta_T^{-1} + d^\star  +2}
    {2}
    \right)
    \log(2/\delta)
    \,
    .
    \label{eq:final-regret-bound-high-prob}
\end{align}
\end{theorem}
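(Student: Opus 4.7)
The plan is to run the proof of \cref{thm:simple} \emph{pathwise} (before taking expectations) and then convert the resulting random quantities into deterministic upper bounds via two concentration arguments specifically tailored to the implicit-exploration (IX) estimator \eqref{eq:ellhat-IX} with $\gamma_t = \eta_t$. The starting point is the pathwise identity
\[
\sum_{t=1}^T (\ell_{t,A_t} - \ell_{t,A^\star})
= \underbrace{\sum_{t=1}^T (\ell_{t,A_t} - \hell_t^\top p_t)}_{(\mathrm{I})}
\;+\; \underbrace{\sum_{t=1}^T \hell_t^\top(p_t - p^\star)}_{(\mathrm{II})}
\;+\; \underbrace{\sum_{t=1}^T (\hell_{t,A^\star} - \ell_{t,A^\star})}_{(\mathrm{III})},
\]
and bound the three terms separately.

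Term $(\mathrm{I})$ is deterministic: using \eqref{eq:ellhat-IX} with $\gamma_t = \eta_t$, one computes $\ell_{t,A_t} - \hell_t^\top p_t = \ell_{t,A_t}\,\eta_t/(p_{t,A_t} + \eta_t) \le \eta_t$, so this term is bounded by $\sum_t \eta_t$ and is absorbed into the $\sum_t \eta_t\cdot K$ summand of \eqref{eq:final-regret-bound-high-prob}. Term $(\mathrm{III})$ is exactly the quantity controlled by the IX concentration lemma of \citet{neu2015explore}: since $(\eta_t)$ is non-increasing and $\sigma(\cH_t)$-measurable, I would apply the lemma with weights $\alpha_t = 2\eta_t \le 2\gamma_t$ and then divide through by $2\eta_T \le 2\eta_t$ to obtain, with probability at least $1-\delta/2$, $\sum_t (\hell_{t,A^\star} - \ell_{t,A^\star}) \le \eta_T^{-1}\log(2/\delta)/2$. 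This supplies one of the two $\log(2/\delta)$ pieces of \eqref{eq:final-regret-bound-high-prob}.

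For term $(\mathrm{II})$, I would reuse the cheating-regret/drift decomposition \eqref{eq:regret-decomposition}--\eqref{eq:drift-ratio-lower-bound} from the proof of \cref{thm:simple}, now applied pathwise. The cheating-regret part is deterministically $\le \log(K)/\eta_T$, while the drift is bounded by $\sum_t \eta_t \sum_i \bigl(\hell_{t,i} p_{t,i}\hat{\Delta}_{t,i} + \hell_{t,i}^2 p_{t,i}\bigr)$, whose conditional expectation was already shown in \cref{thm:simple} to be at most $\eta_t(\tau_t + K)$. To pass to a high-probability statement I would use the pathwise IX bounds $\hell_{t,i}^2 p_{t,i} \le \hell_{t,i}$ and $\hell_{s,i} \le 1/\eta_s$, together with the fact that each missing feedback $\hell_{s,\cdot}$ enters the drift in at most $d^\star$ future rounds (since $|D_s| = d_s \le d^\star$). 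This combination bounds the per-round increment of the drift martingale by $O(\eta_t\, d^\star)$ and leaves a conditional-variance proxy that the IX structure keeps tight, so an IX-adapted exponential-MGF argument (the same one behind Neu's lemma, rather than a generic Azuma/Freedman bound) yields that, with probability at least $1-\delta/2$, the drift is at most $\sum_t\eta_t(\tau_t + 2K) + (d^\star + 2)\log(2/\delta)/2$. A union bound over the two concentration events and a final collection of the three contributions then produces \eqref{eq:final-regret-bound-high-prob}, with the extra $\log(K)/(2\eta_T)$ (relative to \cref{thm:simple}) emerging from the IX-induced slack when relating $\sum_i \hell_{t,i}^2 p_{t,i}$ to its conditional mean.

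The main obstacle is this final high-probability control of the drift: a naive Azuma--Hoeffding or plain Freedman bound would cost an extra $\sqrt{T}\log(1/\delta)$ term, which is absent from the target statement. Avoiding it requires exploiting the IX structure in the same way as in Neu's lemma, so that what would normally be a conditional-variance contribution is absorbed into the conditional-expectation term $\sum_t \eta_t(\tau_t + 2K)$ and only a linear-in-$d^\star$ deviation remains.
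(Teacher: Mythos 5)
Your overall route is the paper's: the same decomposition into the IX bias term $(\mathrm{I})$, the cheating--regret--plus--drift term $(\mathrm{II})$, and the comparator estimator-bias term $(\mathrm{III})$, with \cref{lem:high-prob-base} doing all of the probabilistic work and the deviation in the drift scaling with $d^\star$ because each missing estimate $\hell_{s,\cdot}$ reappears in at most $d^\star$ later rounds. However, your treatment of $(\mathrm{I})$ contains a concrete error: with $\gamma_t=\eta_t$ one has the identity $\ell_{t,A_t}-\hell_t^\top p_t=\ell_{t,A_t}\,\gamma_t/(p_{t,A_t}+\gamma_t)=\gamma_t\sum_i\hell_{t,i}$, and this is only bounded by $1$ pathwise (take $p_{t,A_t}$ and $\gamma_t$ both small with $\ell_{t,A_t}=1$); the claimed inequality $\le\eta_t$ would require $\ell_{t,A_t}\le p_{t,A_t}+\eta_t$, which is false in general. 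So $(\mathrm{I})$ cannot be absorbed deterministically; it must itself be passed through \cref{lem:high-prob-base} to replace $\hell_{t,i}$ by $\ell_{t,i}$ (after which $\gamma_t\sum_i\ell_{t,i}\le\gamma_t K$ gives the second $K$ in $\sum_t\eta_t(\tau_t+2K)$). The paper does this by merging $(\mathrm{I})$ with the drift before invoking the lemma once.

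The drift concentration --- the step you rightly identify as the crux --- is asserted rather than derived, and the one concrete tool you name for it would not work. The actual mechanism is: bound $\hell_{t,i}p_{t,i}\le\ell_{t,i}I_{t,i}\le1$ and $\hell_{t,i}^2p_{t,i}\le\hell_{t,i}$, then swap the order of the double sum $\sum_t\sum_{s\in O_t}$ so that the total coefficient multiplying each $\hell_{t,i}$ (including the contribution of $(\mathrm{I})$) is $\eta_t+\gamma_t+\sum_{s:t\in O_s}\eta_s I_{s,i}\le(d^\star+2)\gamma_t$, since $\eta_s\le\eta_t$ for $s>t$ and at most $d_t\le d^\star$ such $s$ exist; rescaling by $2/(d^\star+2)$ then meets the condition $\alpha_{t,i}\le2\gamma_t$ of \cref{lem:high-prob-base}, which is exactly where both the $\tfrac{d^\star+2}{2}\log(1/\delta)$ deviation and the $\sum_t\eta_t(\tau_t+2K)$ mean term come from. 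By contrast, the auxiliary bound $\hell_{s,i}\le1/\eta_s$ you propose, if applied to the inner sum, yields $\eta_t\hell_{t,i}p_{t,i}\sum_{s\in O_t}\eta_s^{-1}\le\ell_{t,i}I_{t,i}\tau_t$, i.e., a total cost of order $D$ rather than $\sum_t\eta_t\tau_t$; the estimates $\hell_{s,i}$ must be kept intact for the concentration lemma, not crudely upper-bounded. With $(\mathrm{I})$ repaired and this reordering argument made explicit, your plan coincides with the paper's proof.
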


This implies the following corollary, which is proved exactly as Corollary~\ref{cor:simple}.
\begin{corollary} 
\label{cor:simple-high-prob}
With $\eta_t=\frac{1}{2} \sqrt{\frac{3 \log(K)}{2 tK + \sum_{s=1}^t \tau_s}}$, under the conditions of Theorem~\ref{thm:simple-high-prob},
\begin{align*}
    \sum_{t=1}^{T} \left(\ell_{t,A_t} - \ell_{t, A^\star}\right)
    \le
    2
    \sqrt{
    3
    \log(K)
    \left( 2 K T + D 
    \right)
    }
    + 
    \left(
    2
    \sqrt{\frac{2 TK + D}{3\log(K)}}
    + d^\star  +2
    \right)
    \frac{\log(2/\delta)}{2}
    \,
    .
\end{align*}
\end{corollary}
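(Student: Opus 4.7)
The plan is to start from the high-probability regret bound in Theorem~\ref{thm:simple-high-prob}, namely
\begin{align*}
    \sum_{t=1}^{T} \left(\ell_{t,A_t} - \ell_{t, A^\star}\right)
    \le
    \frac{3 \log(K)}{2 \eta_{T}}
    + \sum_{t=1}^{T} \eta_t \left(\tau_{t} + 2 K\right)
    + \left(\frac{\eta_T^{-1} + d^\star + 2}{2}\right) \log(2/\delta),
\end{align*}
and plug in the prescribed step-size $\eta_t = \tfrac{1}{2}\sqrt{3\log(K) / (2tK + \sum_{s=1}^t \tau_s)}$ exactly as in the proof of Corollary~\ref{cor:simple}. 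Note that this sequence is non-increasing, as required by the theorem, because $2tK + \sum_{s=1}^t \tau_s$ is non-decreasing in $t$.

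First I would compute the leading ``FTRL'' term: with this choice, $\eta_T^{-1} = 2\sqrt{(2TK+D)/(3\log(K))}$ (using $D=\sum_{t=1}^T \tau_t$), hence
\begin{align*}
    \frac{3\log(K)}{2\eta_T} = \sqrt{3 \log(K)\,(2TK+D)}.
\end{align*}
Next, for the ``drift'' sum I would set $a_t := 2K + \tau_t > 0$, so that $\eta_t = \tfrac{1}{2}\sqrt{3\log(K) / \sum_{s=1}^t a_s}$, and apply the standard inequality $\sum_{t=1}^T a_t / \sqrt{\sum_{s=1}^t a_s} \le 2\sqrt{\sum_{t=1}^T a_t}$ (as used in Corollary~\ref{cor:simple}, see \citep[Lemma~4]{mcmahan2017survey}) to obtain
\begin{align*}
    \sum_{t=1}^T \eta_t (\tau_t + 2K)
    = \frac{\sqrt{3\log(K)}}{2} \sum_{t=1}^T \frac{a_t}{\sqrt{\sum_{s=1}^t a_s}}
    \le \sqrt{3 \log(K)\,(2TK+D)}.
\end{align*}

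Adding these two contributions gives $2\sqrt{3\log(K)(2TK+D)}$, which is the main term of the stated bound. For the deviation term I would simply substitute the formula for $\eta_T^{-1}$ into $(\eta_T^{-1}+d^\star+2)\log(2/\delta)/2$, which reproduces $(2\sqrt{(2TK+D)/(3\log(K))} + d^\star + 2)\log(2/\delta)/2$ verbatim. Combining these pieces yields the inequality in the corollary.

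There is no real obstacle here: the argument is a direct arithmetic substitution into the bound from Theorem~\ref{thm:simple-high-prob}, together with the one standard summation inequality already invoked in the proof of Corollary~\ref{cor:simple}. The only mild subtlety is noting that the coefficient $\tfrac12$ in $\eta_t$ (compared to the choice in Corollary~\ref{cor:simple}) and the factor $\tfrac{3}{2}$ in front of $\log(K)/\eta_T$ (instead of $1/\eta_T$) are chosen precisely so that the FTRL and drift contributions remain balanced; tracking the constants carefully is what produces the clean $2\sqrt{3\log(K)(2TK+D)}$ leading term.
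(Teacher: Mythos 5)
Your proof is correct and follows exactly the route the paper intends: the paper states that this corollary ``is proved exactly as Corollary~\ref{cor:simple},'' i.e., by substituting the given $\eta_t$ into Theorem~\ref{thm:simple-high-prob} and applying the summation inequality $\sum_{t=1}^T a_t/\sqrt{\sum_{s=1}^t a_s} \le 2\sqrt{\sum_{t=1}^T a_t}$ with $a_t = 2K+\tau_t$. Your constant-tracking (each of the FTRL and drift terms contributing $\sqrt{3\log(K)(2TK+D)}$) matches the stated bound.
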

Note that the dependence on $\delta$ can be improved if the step size $\eta_t$ also depends on $\delta$. In the corollary we chose to tune the algorithm to be oblivious to the error parameter $\delta$.

To prove the theorem, we use the following lemma of \citet{neu2015explore}.%
\footnote{
\citet{neu2015explore} states the lemma for ``a fixed sequence'' of $\gamma_t$, but this is not used anywhere in their proof; their proof goes through without change, as long as $\gamma_t$ is determined by the history $\mathcal{H}_{t}$.
}
\begin{lemma}
[Lemma 1 of \citealp{neu2015explore}]
\label{lem:high-prob-base}
For $t \in [T], i \in [K]$, let $\gamma_t, \alpha_{t,i}$ be non-negative
$\cH_{t}$-measurable random variables satisfying $\alpha_{t,i} \le 2 \gamma_t$, and let $\hell_{t,i}$ be given by \eqref{eq:ellhat-IX}. Then, with probability at least $1 - \delta$,
\begin{align*}
\sum_{t=1}^{T}
\sum_{i=1}^{K}
\alpha_{t,i} 
\left(
\hell_{t,i} - \ell_{t,i}
\right)
\le
\log( 1/ \delta)
\,
.
\end{align*}
\end{lemma}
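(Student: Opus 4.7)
The plan is a standard Chernoff/exponential-moment argument applied to a carefully constructed exponential supermartingale whose one-step conditional moment generating function is controlled via an algebraic inequality that exactly exploits the hypothesis $\alpha_{t,i} \le 2\gamma_t$ together with the implicit-exploration denominator $p_{t,i}+\gamma_t$ in \eqref{eq:ellhat-IX}.

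Concretely, I would set
\begin{align*}
M_t = \exp\left(\sum_{s=1}^{t}\sum_{i=1}^{K} \alpha_{s,i}\left(\hell_{s,i} - \ell_{s,i}\right)\right),
\end{align*}
with $M_0=1$, and aim to show $\EE{M_T}\le 1$. Markov's inequality then gives $\PP{M_T \ge 1/\delta}\le \delta$, and taking logarithms is exactly the stated bound. By iterated conditioning and the tower rule, $\EE{M_T}\le 1$ reduces to the one-step bound $\EE{M_t/M_{t-1}\mid \cH_t}\le 1$ for each $t$. Since $\alpha_{t,i}$, $\gamma_t$, $p_{t,i}$, and (in the oblivious model) $\ell_{t,i}$ are all $\cH_t$-measurable, while only $A_t\sim p_t$ is random given $\cH_t$, and since exactly one $I_{t,i}$ equals $1$, this further reduces to the scalar MGF estimate
\begin{align*}
\sum_{i=1}^{K} p_{t,i}\exp\left(\frac{\alpha_{t,i}\ell_{t,i}}{p_{t,i}+\gamma_t}\right) \le \exp\left(\sum_{i=1}^{K}\alpha_{t,i}\ell_{t,i}\right).
\end{align*}

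The key ingredient is the elementary inequality $u/(1+u/2)\le \log(1+u)$ for $u\ge 0$, applied with $u=\alpha_{t,i}\ell_{t,i}/p_{t,i}$. The hypothesis $\alpha_{t,i}\le 2\gamma_t$ combined with $\ell_{t,i}\in[0,1]$ yields $\alpha_{t,i}\ell_{t,i}/2\le \gamma_t$, and therefore
\begin{align*}
\frac{\alpha_{t,i}\ell_{t,i}}{p_{t,i}+\gamma_t}\le \frac{\alpha_{t,i}\ell_{t,i}}{p_{t,i}+\alpha_{t,i}\ell_{t,i}/2}\le \log\left(1+\frac{\alpha_{t,i}\ell_{t,i}}{p_{t,i}}\right),
\end{align*}
which exponentiates to $\exp\bigl(\alpha_{t,i}\ell_{t,i}/(p_{t,i}+\gamma_t)\bigr)\le 1+\alpha_{t,i}\ell_{t,i}/p_{t,i}$. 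Multiplying by $p_{t,i}$, summing over $i$, and using $1+y\le e^y$ then gives the per-step bound and completes the induction.

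The main obstacle is identifying the right scalar inequality to dispose of the IX denominator: one needs to convert it into a bound whose right-hand side has $p_{t,i}$ rather than $p_{t,i}+\gamma_t$ in the denominator, so that this factor cancels against the $p_{t,i}$ weight produced when averaging over $A_t$. The hypothesis $\alpha_{t,i}\le 2\gamma_t$ provides exactly the slack needed to absorb $\alpha_{t,i}\ell_{t,i}/2$ into $\gamma_t$ and invoke $u/(1+u/2)\le \log(1+u)$; without such slack, the conditional MGF would not linearize in a way that preserves the supermartingale property, and the Chernoff approach would fail.
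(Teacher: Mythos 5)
Your argument is correct and is essentially the standard proof of this lemma, which the paper does not reprove but imports from \citet{neu2015explore} (Lemma 1) with only a footnote on the measurability of $\gamma_t$: the exponential supermartingale, the elementary inequality $u/(1+u/2)\le\log(1+u)$ used to trade the IX denominator $p_{t,i}+\gamma_t$ for $p_{t,i}$, and Markov's inequality are exactly the ingredients of that proof. The only (cosmetic) point to tighten is that the one-step supermartingale bound should be taken conditionally on the full past $\sigma(A_1,\dots,A_{t-1})$ rather than on $\sigma(\cH_t)$ alone, so that $M_{t-1}$ can be pulled out of the conditional expectation; this changes nothing in your computation, since $p_t$, $\gamma_t$, and $\alpha_{t,i}$ are $\sigma(\cH_t)$-measurable and $A_t$ is drawn from $p_t$ conditionally independently of the remaining past.
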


Next, we prove the theorem using ideas from the proofs of Theorem~\ref{thm:simple} and Theorem 1 of \citet{neu2015explore}.
\begin{proof}[Proof of Theorem~\ref{thm:simple-high-prob}]
We apply similar ideas as in the proof of Theorem~\ref{thm:simple}, but instead of taking expectations to control the loss estimate terms, we use Lemma~\ref{lem:high-prob-base}
to replace the loss estimates by the true losses at the expense of an additive logarithmic penalty, with high probability.

To start, we define $\tl{p}_{t}$ as in the proof of Theorem~\ref{thm:simple}, which implies that we have \eqref{eq:full-info-regret} and \eqref{eq:drift-ratio-lower-bound}. Therefore,
on the one hand,
\begin{align}
    \sum_{t=1}^{T} \hell_{t}^\top (p_t - \tl{p}_{t+1}) &= \sum_{t=1}^{T} \sum_{i=1}^{K} \hell_{t,i}^\top p_{t,i} \left(1 - \frac{\tl{p}_{t+1,i}}{p_{t,i}} \right)
  \nonumber
  \\
  &
  \le
    \sum_{t=1}^{T}
    \sum_{i=1}^{K} \hell_{t,i} p_{t,i} \eta_t \hat{\Delta}_{t,i} +
    \sum_{t=1}^{T}
    \eta_t \hell_{t,i}^2 p_{t,i}
    \nonumber
    \\
    &
    \le
    \sum_{t=1}^{T}
    \sum_{i=1}^{K} \ell_{t,i} I_{t,i} \eta_t \sum_{s \in O_t} \hell_{s,i}
    + 
    \sum_{t=1}^{T}
    \sum_{i=1}^{K} \eta_t \hell_{t,i}
    \nonumber
    \\
    &= 
    \sum_{s=1}^{T}
    \sum_{i=1}^{K}
    \hell_{s,i}
    \left(
    \sum_{t: s \in O_t}
    \ell_{t,i} I_{t,i} \eta_t 
    \right)
    + 
    \sum_{t=1}^{T}
    \sum_{i=1}^{K} \eta_t \hell_{t,i}
    \nonumber
    \\
    &\le
    \sum_{s=1}^{T}
    \sum_{i=1}^{K}
    \hell_{s,i}
    \left(\sum_{t:s\in O_t} \eta_t I_{t,i}\right)
    + 
    \sum_{t=1}^{T}
    \sum_{i=1}^{K} \eta_t \hell_{t,i}
    \, ,
    \label{eq:delay-penalty-bound-high-prob}
\end{align}
where the second line follows from \eqref{eq:drift-ratio-lower-bound}, the third line follows by the fact that $\hell_{t,i} p_{t,i} \le \ell_{t,i} I_{t,i} \le 1$, and the last line follows since $\ell_{t,i} \in [0,1]$.

On the other hand, we can derive a deterministic counterpart of \eqref{eq:regret-decomposition} as follows:
\begin{align}
     \sum_{t=1}^{T} \left(\ell_{t,A_t} - \ell_{t, A^\star}\right)
     & =
     \sum_{t=1}^{T} \hell_{t}^\top (\tl{p}_{t+1} - p^\star)
    +
     \sum_{t=1}^{T} \hell_{t}^\top (p_t - \tl{p}_{t+1})
    +
     \epsilon^\star_{1:T}
     +
     \sum_{t=1}^{T}
     \left(
     \ell_{t,A_t} - \hell_{t}^{\top} p_t
     \right)
     \, ,
    \label{eq:regret-decomposition-high-prob}
\end{align}
where $\epsilon^\star_t = \hell_{t,A^\star} - \ell_{t,A^\star}$.
Following the proof of Theorem 1 of \citet{neu2015explore}, it is easy to show that
\begin{align*}
    \sum_{t=1}^{T}
    \left(
    \ell_{t,A_t} - \hell_{t}^{\top} p_t
    \right)
    = \sum_{t=1}^{T} \gamma_t \sum_{i=1}^{K} \hell_{t,i}
    \, .
\end{align*}
Then, combining with \eqref{eq:delay-penalty-bound-high-prob}, we have
\begin{align*}
    \sum_{t=1}^{T} \hell_{t}^\top (p_t - \tl{p}_{t+1})
    +
    \sum_{t=1}^{T}
    \left(
    \ell_{t,A_t} - \hell_{t}^{\top} p_t
    \right)
    &
    \le 
    \sum_{t=1}^{T}
    \sum_{i=1}^{K}
    \left(\eta_t + \gamma_t + \sum_{s:t \in O_s} \eta_s I_{s,i}\right)
    \hell_{t,i}
    \\
    &
    =
    \sum_{t=1}^{T}
    \sum_{i=1}^{K}
    \left(\eta_t + \gamma_t + \sum_{s:t \in O_s} \eta_s I_{s,i}\right)
    \ell_{t,i}
    +
    \frac{d^\star + 2}
    {2}
    \hat{\epsilon}_{1:T}
    \, ,
\end{align*}
where $\hat{\epsilon}_t=
\sum_{i=1}^{K} 
2
\frac{
\eta_t + \gamma_t + \sum_{s:t \in O_s} \eta_s I_{s,i}}
{d^\star + 2}
(\hell_{t,i} - \ell_{t,i})$. Note that in the latter definition, the coefficient of $(\hell_{t,i} - \ell_{t,i})$ is bounded by $2\gamma_t$ since $\eta_t=\gamma_t$, $\eta_t \ge \eta_s$ for all $s$ such that $t \in O_s$ (since $s > t$ in this case), and $d^\star \ge |O_s|$. 
Hence, with probability at least $1-\delta'$,  $\hat{\epsilon}_{1:T} \le \log(1/\delta')$  by Lemma~\ref{lem:high-prob-base} for any $\delta' \in (0,1)$.
Furthermore, 
\[
\sum_{t=1}^{T}
    \sum_{i=1}^{K} \sum_{s: t\in O_s} \eta_s I_{s,i} = 
    \sum_{s=1}^T \sum_{t: t\in O_s} \eta_s = \sum_{s=1}^T \eta_s \tau_s~.
\]
Therefore, using that $\ell_{t,i} \in [0,1]$ and $\eta_t=\gamma_t$, the first term on the right hand side above can be bounded as
\[
\sum_{t=1}^{T}
    \sum_{i=1}^{K}
    \left(\eta_t + \gamma_t + \sum_{s:t \in O_s} \eta_s I_{s,i}\right)
    \ell_{t,i}
\le \sum_{t=1}^{T} 2K \eta_t + \sum_{s=1}^T \eta_s \tau_s = \sum_{t=1}^{T} \eta_t(\tau_t + 2K)\,.
\]

Putting these back into \eqref{eq:regret-decomposition-high-prob}, combining with \eqref{eq:full-info-regret}, and letting $\gamma_t = \eta_t$, for any $\delta' \in [0,1]$ we have
\begin{align*}
    \sum_{t=1}^{T} \left(\ell_{t,A_t} - \ell_{t, A^\star}\right)
    & \le
    \frac{\log(K)}{\eta_{T}} + \sum_{t=1}^{T} \eta_t (\tau_t + 2K) +
    \frac{d^\star + 2}{2} \hat{\epsilon}_{1:T} + \epsilon^\star_{1:T}
    \\
    & \le 
    \frac{\log(K)}{\eta_{T}} + \sum_{t=1}^{T} \eta_t (\tau_t + 2K) 
    + \frac{\log(K/\delta')}{2 \eta_T}
    + \frac{d^\star+2}{2}\log(1/\delta')
    \\
    & =
    \frac{3\log(K)}{2\eta_{T}} + \sum_{t=1}^{T} \eta_t(\tau_t + 2K)
    + \frac{\eta_T^{-1} + d^\star + 2}{2} \log(1/\delta')
    \,
    ,
\end{align*}
with probability at least $1-2\delta'$, where we also used that
\begin{align*}
\epsilon^\star_{1:T} 
&
\le
\frac{1}{2\gamma_T}
\sum_{t=1}^{T}
2 \gamma_t
\left(
\hat{\ell}_{t,A^*} - \ell_{t,A^\star}
\right) 
= 
\frac{1}{2\gamma_T}
\sum_{t=1}^{T}
\sum_{i=1}^{K}
2 \gamma_t
\I{A^* = i}
\left(
\hat{\ell}_{t,i} - \ell_{t,i}
\right) 
\le
\frac{1}{2\gamma_T}
\log(K/\delta')
\end{align*}
with probability at least $1-\delta'$ simultaneously for all $A^\star$ by  Lemma~\ref{lem:high-prob-base} and the union bound. 

Letting $\delta' = \delta/2$ and using the assumption that $\ell_{t,i} \le 1$ completes the proof.
\end{proof}

\section{Skipping time steps}
\label{sec:skipping}

Looking at the form of the bound in Theorem~\ref{thm:simple}, one can observe that the terms in the second summation are a minimum of $1$ and $\eta_t(\tau_t+K)$, where the latter comes from bounding the drift terms. As such, whenever $1$ is smaller than $\eta_t(\tau_t + K)$, our analysis in \eqref{eq:delay-penalty-bound} is too pessimistic.
The effect of this could be avoided by keeping the minimum term when we define the step size $\eta_t$, but this is not straightforward if we want to keep the simple sum structure of $1/\eta_t^2$, which is used in the proof of Corollary~\ref{cor:simple}. A simpler approach is to ensure that $\tau_t$ never becomes too large (compared to $\eta_t$), which can be done by limiting individual delays $d_t$ by pretending that their corresponding loss arrives (and has value 0) when $d_t$ is too large, and bound the regret in the corresponding time step separately by 1.
This essentially means that the algorithm eventually skips time steps with excessive delays. This also addresses another problem: namely that the cumulative delay $D$ can be dominated by a few large delay values, which--intuitively--should not cause such a large penalty in the regret.

The idea of skipping, originally coined by \citet{thune19}, was perfected by \citet{ZiSe19}, who provided a proper way to skip some time steps and tune the step size accordingly. In what follows, we adopt their way of tuning the step size, and closely follow their analysis (which they do in the context of their more complicated follow-the-regularized-leader algorithm).

Next we describe how the tuning method of \citet{ZiSe19} can be adapted to our \adaexp algorithm:
The advanced tuning method of skipping procedure works as follows: For every time step $t$ and time steps $s \le t$, we keep a binary indicator $a_s^t \in \{0,1\}$ such that for any round $t$ we include the loss from time $s$ to the set of missing losses if $a_s^t=1$, and not if $a_s^t$. That is, the number of \emph{counted} missing losses is
\[
\dtau_t = \sum_{s=1}^t a_s^t \I{s+\dt_s \ge t}~.
\]
Let $\dD_t = \sum_{s=1}^t \dtau_t$ denote the cumulative number of counted missing feedbacks. $a_s^t$ is originally set to $1$ for all $t \ge s$, but if $s \in O_t$ and $\min\{\dt_s,t-s\} > \sqrt{\dD_t/\log(K)}$, we set $a_s^{t'}=0$ for all $t'>t$ (by Lemma~7 of \citealp{ZiSe19}, this happens for at most one $s$ value in any time step $t$).

Tuning the step size of \adaexp with $\dtau$ instead of $\tau$, the regret of the \dedaexp algorithm can be bounded as follows:

\begin{theorem}
(i) Bound in expectation: The expected regret of the \adaexp algorithm with loss estimates \eqref{eq:ellhat1} and step sizes $\eta_t=\sqrt{\frac{\log(K)}{tK + \sum_{s=1}^t \dtau_t}}$ can be bounded as
\begin{align*}
    \sum_{t=1}^{T} \left(\EE{\ell_{t,A_t}} - \ell_{t, A^\star}\right)
    \le 3\sqrt{TK\log(K)} + 10 \max\left\{2 \log K, 
    \min_{R \subset [T]} \left(|R| + \sqrt{D_{\bar{R}}
    \log(K)}\right)\right\},
\end{align*}
where for any $R \subset [T]$, $\bar{R}=[T]\setminus R$, and $D_{\bar{R}}=\sum_{t \in \bar{R}} \dt_t$.

(ii) High-probability bound: Let $\delta \in (0,1)$. The regret of the \adaexp algorithm with loss estimates \eqref{eq:ellhat-IX} and step sizes $\eta_t=\gamma_t=\frac{1}{2} \sqrt{\frac{3 \log(K)}{2 tK + \sum_{s=1}^t \tau_s}}$ can be bounded, with probability at least $1-\delta$, as
\begin{align*}
\sum_{t=1}^{T} \left(\ell_{t,A_t} - \ell_{t, A^\star}\right)
& \le \left(c_1 + c_2 \frac{\log(2/\delta)}{\log(K)}\right)
\sqrt{KT\log(K)} \\
& \qquad +
\left(c_3 + c_4 \frac{\log(2/\delta)}{\log(K)}\right)
\max\left\{2 \log K, 
    \min_{R \subset [T]} \left(|R| + \sqrt{D_{\bar{R}}
    \log(K)}\right)\right\},
\end{align*}
where $c_1=2\sqrt{6}$, $c_2=\sqrt{2/3}$, $c_3=4(\sqrt{3}+1)$, and $c_4=1+2/\sqrt{3}$.
\end{theorem}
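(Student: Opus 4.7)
The plan is to carry over the analyses of Theorem~\ref{thm:simple} and Theorem~\ref{thm:simple-high-prob}, substituting the true missing-feedback counts $\tau_t$ and cumulative delay $D$ by their \emph{counted} analogues $\tilde{\tau}_t$ and $\tilde{D}_T$, and then converting $\tilde{D}_T$ into the oracle quantity $\min_{R}\bigl(|R|+\sqrt{D_{\bar{R}}\log K}\bigr)$ via the skipping lemma of \citet{ZiSe19}. The first thing to do is to make explicit how the algorithm is modified: the ``missing loss'' term in the update is redefined to $\sum_{s\in O_t} a_s^t \hat{\ell}_s$ rather than $\hat{\Delta}_t$. Once $a_s^{t'}=0$ is declared (at the first round $t$ where $s\in O_t$ and $\min\{d_s,t-s\}>\sqrt{\tilde{D}_t/\log K}$), round $s$'s feedback is henceforth treated as if it had arrived with zero contribution. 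The true regret $R_T(A^\star)=\sum_t \ell_{t,A_t}-\sum_t \ell_{t,A^\star}$ is unaffected by this bookkeeping change.

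Next I would rerun the proof of Theorem~\ref{thm:simple} with this modified update. The only change occurs in the derivation of \eqref{eq:delay-penalty-bound}: the inner sum over $O_t$ is restricted to $\{s\in O_t: a_s^t=1\}$, so every occurrence of $\tau_t$ in the drift bound is replaced by $\tilde{\tau}_t$. With the step size $\eta_t=\sqrt{\log K/(tK+\sum_{s=1}^{t}\tilde{\tau}_s)}$, the same telescoping inequality used in Corollary~\ref{cor:simple} then yields
\begin{align*}
R_T(A^\star)\ \le\ 3\sqrt{\log(K)(TK+\tilde{D}_T)}.
\end{align*}
For part~(ii), the same substitution in the proof of Theorem~\ref{thm:simple-high-prob} handles the drift term analogously. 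In addition, the quantity $d^\star$ in the bound~\eqref{eq:final-regret-bound-high-prob} is replaced by the data-dependent bound $\sqrt{\tilde{D}_T/\log K}$, since by construction every \emph{counted} missing feedback at round $t$ has effective delay at most $\sqrt{\tilde{D}_t/\log K}\le\sqrt{\tilde{D}_T/\log K}$.

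Finally, I would invoke the skipping bound of \citet{ZiSe19} (their Lemma~8), which gives, for every $R\subset[T]$,
\begin{align*}
\sqrt{\tilde{D}_T\log K}\ \le\ c\cdot\max\!\left\{2\log K,\ |R|+\sqrt{D_{\bar R}\log K}\right\}
\end{align*}
for a universal constant $c$. Taking a minimum over $R$ and substituting into the bounds obtained in the previous step yields both claims, the constants $c_1,\ldots,c_4$ in part~(ii) being the product of the prefactors in Corollary~\ref{cor:simple-high-prob} and those produced by the skipping lemma.

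The main obstacle, which is essentially all bookkeeping, lies in part~(ii): I need to check that after the $\tilde{\tau}$-substitution the coefficient of $\hat{\ell}_{t,i}-\ell_{t,i}$ appearing before the invocation of Lemma~\ref{lem:high-prob-base} is still bounded by $2\gamma_t$. Concretely, one must control $\sum_{s:\,t\in O_s,\,a_t^s=1}\eta_s I_{s,i}$ by a constant multiple of $\gamma_t\sqrt{\tilde{D}_T/\log K}$, and then re-normalise in the argument of Lemma~\ref{lem:high-prob-base} by $\sqrt{\tilde{D}_T/\log K}+2$ rather than $d^\star+2$. This follows from the monotonicity of $\eta_t$ together with the skipping rule that bounds the effective delay of counted feedbacks, but the indexing requires care.
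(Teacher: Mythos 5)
Your overall route coincides with the paper's: substitute the counted quantities $\dtau_t$, $\dD_T$ into Corollaries~\ref{cor:simple} and~\ref{cor:simple-high-prob}, bound the effective maximum delay by $\sqrt{\dD_T/\log K}$, and convert $\sqrt{\dD_T\log K}$ into $\max\{2\log K,\min_R(|R|+\sqrt{D_{\bar R}\log K})\}$ via the results of \citet{ZiSe19}. However, there is a genuine gap in the middle step. Once a round $s$ is skipped, its loss estimate must be zeroed out \emph{everywhere} in the update, not just removed from $\hat{\Delta}_t$: keeping the true $\hat{L}_t$ while restricting only the missing-loss correction would require the algorithm to know $\hell_s$ for skipped rounds whose feedback has not arrived, which is not implementable. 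The implementable algorithm is therefore exactly \adaexp run on the modified instance with losses $\tell_t$ (equal to $\ell_t$ for unskipped rounds and $0$ for skipped ones) and delays $\tdt_t$. Consequently, rerunning the analysis bounds $\sum_t(\EE{\tell_{t,A_t}}-\tell_{t,A^\star})$, the regret against the \emph{zeroed-out} losses, not the true regret. Your claim $R_T(A^\star)\le 3\sqrt{\log(K)(TK+\dD_T)}$ is therefore missing the additive conversion term $|S|$, where $S$ is the set of skipped rounds, since each skipped round can contribute up to $1$ to the true regret while contributing $0$ to the analyzed one. The remark that ``the true regret is unaffected by this bookkeeping change'' is correct about the definition of regret but does not make the analyzed quantity equal to it.

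This omission is not cosmetic: bounding $|S|$ is where a second ingredient from \citet{ZiSe19} enters (their Lemma~5 gives $|S|\le 2\sqrt{\log(K)\dD_T}$), and it is responsible for the constants in the theorem (the paper's $5\sqrt{\log(K)\dD_T}=3\sqrt{\cdot}+2\sqrt{\cdot}$, hence the factor $10$ after the conversion to $\min_R$; similarly for $c_3,c_4$ in part~(ii)). Without it your constants would be wrong and, more importantly, the bound itself would be unjustified. The remainder of your plan, including the replacement of $d^\star$ by $\sqrt{\dD_T/\log K}$ in the high-probability bound and the check that the coefficients fed to Lemma~\ref{lem:high-prob-base} stay below $2\gamma_t$, is sound and matches the paper; note only that the paper obtains all of this by the cleaner device of observing that the skipped algorithm \emph{is} \adaexp on $(\tell_t,\tdt_t)$ and invoking the earlier corollaries as black boxes, rather than rederiving the drift bound.
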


The meaning of the theorem is that the regret of the algorithm is essentially of the same order as if a set of time steps $R$ was to be skipped, and the algorithm was only run on its complement $\bar{R}$. Note that while our original high-probability regret bound (cf. Corollary~\ref{cor:simple-high-prob}) depended on the maximum delay $d^\star$, this dependence is eliminated from the high-probability bound of the theorem, as the maximum delay is effectively bounded by $\sqrt{\tilde{D}_T/\log(K)}$.

\begin{proof}
First define the \emph{effective} delay for time step $s$ as $\tdt_s=\sum_{t=s+1}^{s+d_s} a_s^t$; that is, if time step $s$ is not ``skipped'' (i.e., $a_s^t=1$ for all $t \in [T]$), $\tdt_s=\dt_s$, and $\tdt_s=t-s$ if $s$ is skipped at the end of time step $t$, that is, $a_s^{t+1}=0$ and $a_s^t=1$.
Let $S=\{t\in[T]: a_t^T=0\}$ denote the set of skipped time steps, and define a new loss sequence $(\tell_t)$ such that the loss is zeroed out if the corresponding time step is ever skipped by the algorithm, that is, $\tell_t=\ell_t$ if $t \not\in S$ and $\tell_t=0$ if $t \in S$. Note that this loss sequence can be constructed deterministically from the loss sequence $(\ell_t)$ and the delay sequence $(\dt_t)$. 

It is easy to see that the $\tau_t$-dependent tuning of \adaexp, considered in this theorem, results in exactly the same sequence of predictions $(p_t)$ as the original \adaexp algorithm for the losses $(\tell_t)$ and delays $(\tdt_t)$. 

To prove the upper bound on the expected regret, we start by applying  Corollary~\ref{cor:simple} for the latter case:
\begin{align}
    \sum_{t=1}^{T} \left(\EE{\ell_{t,A_t}} - \ell_{t, A^\star}\right)
    & \le
    |S| + \sum_{t=1}^{T} \left(\EE{\tell_{t,A_t}} - \tell_{t, A^\star}\right) 
    \nonumber \\ 
    & 
    \le
    |S| +  3\sqrt{\log(K)\left(TK + \sum_{t=1}^T \tdt_t\right)}~.
    \label{eq:skipped}
\end{align}

To finish, we need to bound $|S|$, and relate the resulting bound to the bound in the theorem for an arbitrary $R \subset [T]$. To do so, we recycle a few results from \citet{ZiSe19}:
In their Lemma~5, they show that $|S| \le 2 \sqrt{\log(K)\sum_{t=1}^T \tdt_t}$.
Furthermore, in the proof of their Theorem~2, they show that if $\sum_{t=1}^T \tdt_t \ge 16 \log(K)$, then
\[
\sqrt{\sum_{t=1}^T \tdt_t \log(K)} \le 2 \min_{R \subset [T]} \left(|R| + \sqrt{\sum_{t \in \bar{R}} \dt_t \log(K)}\right).
\]
Combining these results with \eqref{eq:skipped} (and using $\sqrt{a+b} \le \sqrt{a}+\sqrt{b}$ for any $a,b>0$), we obtain that the expected regret can be bounded as
\begin{align*}
   \lefteqn{ \sum_{t=1}^{T} \left(\EE{\ell_{t,A_t}} - \ell_{t, A^\star}\right)}\\
    & \le 3\sqrt{TK \log(K)} + 5 \sqrt{\log(K)\sum_{t=1}^T \tdt_t} \\
    & \le 3\sqrt{TK \log(K)} + 10 \max\left\{2 \log K, 
    \min_{R \subset [T]} \left(|R| + \sqrt{\sum_{t \in \bar{R}} \dt_t \log(K)}\right)\right\},
\end{align*}
proving the bound on the expected regret.

To get the high-probability bound, we use Corollary~\ref{cor:simple-high-prob} to get a high-probability version of \eqref{eq:skipped}:
With $\eta_t=\gamma_t=\frac{1}{2} \sqrt{\frac{3 \log(K)}{2 tK + \sum_{s=1}^t \tau_s}}$, and defining $\tdt^\star=\max_{t\in[T]} \tdt_t$, we obtain that with probability at least $1-\delta$,
\begin{align}
    \lefteqn{\sum_{t=1}^{T} \left(\ell_{t,A_t} - \ell_{t, A^\star}\right)} \nonumber \\
    & \le
    |S| + 
    2
    \sqrt{
    3
    \log(K)
    \left( 2 K T + \sum_{t=1}^T \tdt_t 
    \right)
    }
    + 
    \left(
    2
    \sqrt{\frac{2 TK + \sum_{t=1}^T \tdt_t}{3\log(K)}}
    + \tdt^\star  +2
    \right)
    \frac{\log(2/\delta)}{2}
    \,
    .
\end{align}
By construction, $\tdt^\star \le \sqrt{\tilde{D}_T/\log(K)} = \sqrt{ \sum_{t=1}^T \tdt_t/\log(K)}$. Using this and the same steps as for the regret bound in expectation proves the high-probability bound of the theorem.
\end{proof}

\section{Adapting to delay and data at the same time}
\label{sec:deda}
In this section, 
we consider a different step-size sequence 
that yields AdaGrad-style bounds. Recall that from
\eqref{eq:basic-bound-on-delay-penalty} in the proof of Theorem~\ref{thm:simple}, we have
\begin{align*}
    \sum_{t=1}^{T}
    \sum_{i=1}^{K}
    \hell_{t,i}^\top p_{t,i} 
    \left(
    1 - \frac{\tl{p}_{t+1,i}}{p_{t,i}}
    \right)
    \le
    \sum_{t=1}^{T}
    \eta_t
    \sum_{i=1}^{K}
    \lfwd_{t,i},
\end{align*}
where $\lfwd_{t,i} = \hell_{t,i} p_{t,i} \hat{\Delta}_{t,i}
    + \hell_{t,i}^2 p_{t,i}$.
Therefore, ideally, we want to set the step-size $\eta_t$ as
\begin{align}
    \eta_t
    =
    \sqrt{\frac{\log(K)}{\sum_{i=1}^{K} \lfwd_{1:t, i}}}
    \label{eq:adaptive-step-mixed-delay-and-data-IDEAL}
    \,
    ,
\end{align}
to optimize the regret bound and obtain a data-adaptive bound of the form
\begin{align}
    R_T
    \le 
    3
    \EE{
    \sqrt{
    \log(K)
    \sum_{i=1}^{K}
    \lfwd_{1:T,i}
    }
    }
    \, .
    \label{eq:delay-data-adaptive-bound-aaai-IDEAL}
\end{align}
However, we do not have access to the missing observations $\hell_s, s \in O_t \cup \{t\}$, when calculating $\eta_t$. Therefore, we approximate the step-size of \eqref{eq:adaptive-step-mixed-delay-and-data-IDEAL} with another sequence that can be computed efficiently at each time step. Algorithm~\ref{alg:delay-and-data-adaexp} provides the details of this approximation.

\begin{algorithm}[t]
    \KwIn{Number of actions $K$.}
    
    \textbf{Initialization:}

    \begin{fleqn}
    \begin{align*}
        &\
        z_{1,i}  \gets 0,  m_{1,i} \gets 0 \text{ for all } i \in [K]
        .
        \\
        &\
        d^\star_0  \gets 0,  \Lbck_1 \gets 0
        .
        \\[-20pt]
        \end{align*}
    \end{fleqn}
    
    \For{$t = 1, 2, \dots, T$}{
        $d^\star_{t}
            \gets
            \max\{d_{t}, d^\star_{t-1}\}
        $
        .
        
        $\frac{1}{\gamma_t} 
        = \frac{1}{\eta_t}
        \gets
        \frac
        {
        4 (d^\star_t)^2
        + 6 d^\star_t
        + 2
        }
        {\log(K)}
        +
        \sqrt{
        \frac
        {
         \Lbck_t
        }
        {\log(K)}
        }
        $
        .
        
        $p_{t, i} 
            \gets 
            \frac
            {\exp(- \eta_t z_{t,i})}
            {\sum_{j=1}^{K} 
            \exp(- \eta_t z_{t,j})
            }
        $
        .
        
        Play action $A_t \in [K]$ selected randomly according to distribution $p_t$.
        
        Store $\gamma_t, m_{t}$, $z_{t}$, $p_{t}$, and $A_t$ in memory.
        
        \For{$s: s+d_s = t$}{
        Retrieve $A_s, p_s, m_s, z_s$ and $\gamma_s$ from memory.
        
        Let $\hell_{s,i} = \frac
        {\ell_{s,A_s} I_{s,i}}
        {p_{s,i}+\gamma_s}
        $
        for all
        $i \in [K]$.
        }

        \begin{fleqn}
        \begin{align*}
            &\
            z_{t+1, i}
            \gets 
            z_{t,i} + \sum_{s: s+d_s = t} \hell_{s,i}, 
            \text{ for all }
            i \in [K]
            .
            & & & & & &
            & & & & & &
            & & & & & &
            \\
            &\
            m_{t+1, i}
            \gets
            m_{t,i} + \sum_{s: s+d_s = t} \hell_{s,i} p_{s,i},
            \text{ for all }
            i \in [K]
            .
            & & & & & &
            & & & & & &
            & & & & & &
            \\[-20pt]
        \end{align*}
        \end{fleqn}
        
        \begin{fleqn}
        \begin{align*}
            \Lbck_{t+1}
            \gets 
            \Lbck_{t} 
            +
            \sum_{i=1}^{K}
            \sum_{s: s+d_s = t}
            \left(
            \hell_{s,i} (m_{t+1,i} - m_{s,i})
            +
            \hell_{s,i} p_{s,i}
            \left(
            z_{t+1,i} - z_{s,i}
            \right)
            \right)
            .
            \\[-20pt]
        \end{align*}
        \end{fleqn}
    }
    \caption{Delay and Data Adaptive \ExpThree (\dedaexp).}
    \label{alg:delay-and-data-adaexp}
\end{algorithm}

The algorithm keeps track of the largest delay ($d^\star_t$) and, similarly to \citet{thune19}, to do so it needs to observe the delay for action $A_t$ in advance. This can be avoided if the algorithm has access to an a priori upper bound $d^B$ on the maximum delay: setting $d^\star_0$ to this upper bound results in $d^\star_t=d^B$ for all $t \in [T]$.
As usual in AdaGrad-style algorithms, the algorithm maintains additional vectors $m_t$, $z_t$ and $\Lbck_t$ to compute the step size. In addition, the algorithm uses a memory to store values of $m_s$, $z_s$, and $p_s$ for past steps $s$ with missing feedback. In particular, after coming up with the action distribution $p_s$ for time step $s$, we store the current values of $m_s$ and $z_s$, as well as the action distribution $p_{s}$ and the action taken, $A_s$. When the feedback for time step $s$ arrives at the end of time step $t = s + d_s$, we retrieve these values, and use them to compute $z_{t+1}$, $m_{t+1}$, and $\Lbck_{t+1}$.

\paragraph{Interpretation of $m_t$, $z_t$, and $\Lbck_t$.} It is easy to verify that for all $t \in [T+1]$ and $i \in [K]$,
\begin{align*}
    z_{t,i}
    =
    \sum_{j: j+d_j < t} \hell_{j,i}
    \,
    ,
    \qquad
    \text{and,}
    \qquad
    m_{t,i}
    =
    \sum_{j: j+d_j < t}
    \hell_{j,i} p_{j,i}
    \,
    .
\end{align*}
In addition, if we define
\begin{align*}
    \lbck_{s,i}
    =
    \sum_{j: s \le j+d_j \le s+d_s}
    \hell_{s,i}
    \hell_{j,i} 
    (p_{j,i}+p_{s,i})
    \,
    ,
\end{align*}
then it is easy to see that
$\lbck_{s,i} = \hell_{s,i}
\left(
  m_{s+d_s+1, i} - m_{s,i}
\right)
+ \hell_{s,i} p_{s,i} 
\left(
z_{s+d_s+1, i} - z_{s,i}
\right)
$. Therefore,
$\Lbck_{t} = \sum_{i=1}^{K} \sum_{s: s+d_s < t} \lbck_{s, i}$, and the algorithm is using the step-size schedule
\begin{align}
    \eta_t^{-1}
    =
    \frac
    {
    4 (d^\star_t)^2
    + 6 d^\star_t
    + 2
    }
    {\log(K)}
    +
    \sqrt{
    \frac
    {\sum_{i=1}^{K}
    \sum_{s:s+d_s < t} \lbck_{s, i}
    }
    {\log(K)}
    }
    \,
    .
    \label{eq:adaptive-step-mixed-delay-and-data}
\end{align}
In Section~\ref{sec:deda-analysis}, we show that this step-size efficiently approximates the ideal step-size  \eqref{eq:adaptive-step-mixed-delay-and-data-IDEAL}, in the sense that it results only in a lower-order penalty in the regret compared to the regret bound of \eqref{eq:delay-data-adaptive-bound-aaai-IDEAL}.

\paragraph{Memory requirement.} The implementation of the memory required by Algorithm~\ref{alg:delay-and-data-adaexp} is application-dependent, and is very straightforward in specific cases. For example, in a distributed setting, the worker node that is responsible for decision $s$ will simply receive $m_s, z_s, p_{s}$ before making a decision, and return them along with the feedback after receiving it. In any case, the memory can be explicitly implemented using a hash table with an amortized computation cost of $\Theta(1)$ per storage and retrieval, and a total storage of $\mathcal{O}\left( d^\star_T \right)$.

\subsection{Analysis}
\label{sec:deda-analysis}
We start the analysis by showing that the backward loss estimates $\ell^{\textnormal{bck}}_{t,i}$ are not far away from the forward losses $\ell^{\textnormal{fwd}}_{t,i}$. Hence, the step-size sequence
given by
\eqref{eq:adaptive-step-mixed-delay-and-data} will result in a regret not far away from what could be achieved by 
the ideal step-size \eqref{eq:adaptive-step-mixed-delay-and-data-IDEAL}.
This is captured by the following lemma, proved in Section~\ref{sec:proof-of-lemma}:
\begin{lemma}
[Step-size control]
\label{lem:step-size-control}
For all $t \in [T]$,
\begin{align}
    \sum_{s: s+d_s < t}
    \lbck_{s,i}
    \le
    &
    2
    \sum_{j = 1}^{t}
    \sum_{s \in O_j \cup \{j\} \cup D_j}
    \hell_{s,i}
    \hell_{j,i}
    p_{j,i}
    \,
    ,
    \label{eq:lbck-bound}
\end{align}
where $D_t, t \in [T],$ is as in Theorem~\ref{thm:deda-exp3}.
In addition, if $\hell_t, t \in [T],$ is given by Algorithm~\ref{alg:delay-and-data-adaexp} using a non-increasing sequence of $\gamma_t$ values, then for all $t \in [T]$,
\begin{align}
    \sum_{i=1}^{K}
    \lfwd_{1:t,i}
    \le
    &
    \sum_{i=1}^{K}
    \sum_{s: s+d_s < t}
    \lbck_{s,i}
    + 
    \frac
    {4 {d^\star_t}^2+6d^\star_t+2} {\gamma_t}
    \,
    .
    \label{eq:lfwd-bound}
\end{align}
\end{lemma}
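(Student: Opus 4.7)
The plan is to prove both bounds by reorganizing each side as a sum over ordered pairs $(s,j)$ of time steps and matching terms combinatorially. I will rely on two elementary pointwise bounds, $\hell_{t,i} p_{t,i}\le 1$ and $\sum_i \hell_{t,i}\le 1/\gamma_t$, which follow from $\ell_{t,i}\in[0,1]$, the form of \eqref{eq:ellhat-IX}, and the fact that $\hell_{t,i}$ is nonzero only for $i=A_t$.

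For \eqref{eq:lbck-bound}, I would split $\lbck_{s,i}$ into the $p_{j,i}$-weighted half and the $p_{s,i}$-weighted half. In the $p_{j,i}$-half I swap the order of summation, making $j$ the outer variable: the inner constraints $s+d_s\ge j+d_j\ge j$ and $s\le j+d_j$ are exactly the membership conditions for $s\in O_j\cup\{j\}\cup D_j$ (a case check on $s<j$, $s=j$, $s>j$ using the definitions of $O_j$ and $D_j$). The $p_{s,i}$-half is handled by renaming the dummy variables $s\leftrightarrow j$; the resulting inner constraint $j\le s+d_s\le j+d_j$ again places $s$ into $O_j\cup\{j\}\cup D_j$. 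Summing the two halves yields the factor of $2$ in \eqref{eq:lbck-bound}.

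For \eqref{eq:lfwd-bound}, I would first rewrite $\sum_i \lfwd_{1:t,i}=\sum_{j=1}^{t}\sum_i\sum_{s\in O_j\cup\{j\}}\hell_{s,i}\hell_{j,i}p_{j,i}$, absorbing the quadratic term $\hell_{j,i}^2 p_{j,i}$ as the $s=j$ contribution. I would then split the pairs $(s,j)$ into $A_1$, those with both $s+d_s<t$ and $j+d_j<t$, and a boundary set $A_2$ (where at least one of the two feedbacks has not arrived by time $t$). For $A_1$, I would build an injective map into terms of $\sum_i\sum_{s':s'+d_{s'}<t}\lbck_{s',i}$ based on arrival order: if $s<j$ and $s+d_s\le j+d_j$, send $(s,j)$ to the $\lbck$-term with outer index $j$ and inner index $s$; if $s<j$ and $s+d_s>j+d_j$, send to the $\lbck$-term with outer index $s$ and inner index $j$; and if $s=j$, send to the self-term of $\lbck_{j,i}$. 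In each case the defining double inequality of $\lbck$ follows from $s\in O_j\cup\{j\}$ together with the chosen arrival order, the outer constraint $s'+d_{s'}<t$ is automatic from the $A_1$ hypothesis, and since every $\lbck$-term carries weight $p_{s,i}+p_{j,i}\ge p_{j,i}$, this gives $A_1\le\sum_i\sum_{s':s'+d_{s'}<t}\lbck_{s',i}$.

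For $A_2$, the constraints $s+d_s\ge t$ or $j+d_j\ge t$ combined with $s\le j\le t$ and $d_u\le d^\star_t$ (for $u\le t$) force both $s$ and $j$ into the window $[t-d^\star_t,t]$, leaving only $O((d^\star_t)^2)$ pairs; each contributes $\sum_i\hell_{s,i}\hell_{j,i}p_{j,i}\le \hell_{s,A_s}\cdot(\hell_{j,A_j}p_{j,A_j})\le 1/\gamma_s\le 1/\gamma_t$ by $\hell p\le 1$ and the monotonicity of $\gamma$. Tallying the exact counts of self-pairs and ordered cross-pairs then produces the polynomial $4(d^\star_t)^2+6d^\star_t+2$. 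The main obstacle is the bookkeeping in this second bound: verifying that the map from $A_1$ into $\lbck$-terms is indeed well-defined and injective (so that no $\lbck$-term is double-counted in the inequality), and carefully enumerating the boundary configurations in $A_2$ to match the stated integer constants. Once those combinatorial steps are in place, the analytic content reduces to the two elementary pointwise bounds on $\hell$.
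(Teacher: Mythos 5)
Your proposal is correct and follows essentially the same route as the paper: both arguments reindex the double sums over pairs $(s,j)$ according to the arrival order of $s+d_s$ versus $j+d_j$ (your injective map for \eqref{eq:lfwd-bound} and your two-halves split for \eqref{eq:lbck-bound} correspond exactly to the paper's identity $\lfwd_{1:t,i}+S_{t,i}=\sum_{s:s+d_s<t}\lbck_{s,i}+M_{t,i}$ with the non-negative correction terms $S_{t,i}$ and $M_{t,i}$ dropped or bounded as needed), and both isolate the boundary pairs near $t$ and use $\hell_{j,i}p_{j,i}\le 1$, $\sum_i\hell_{s,i}\le 1/\gamma_s\le 1/\gamma_t$ to produce the $(4(d^\star_t)^2+6d^\star_t+2)/\gamma_t$ term. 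The only minor imprecision is that in your boundary set $A_2$ the index $s$ can reach down to $t-2d^\star_t$ (not $t-d^\star_t$) when only $j+d_j\ge t$ holds, but the resulting count of at most $2(d^\star_t+1)^2$ pairs, each contributing at most $1/\gamma_t$, still fits within the stated polynomial.
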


Next, we show that using \eqref{eq:adaptive-step-mixed-delay-and-data} results in at most a lower-order penalty on the ideal regret bound given by \eqref{eq:delay-data-adaptive-bound-aaai-IDEAL}. This is captured by the next theorem.
\begin{theorem}
[Adapting to delay and data]
\label{thm:deda-exp3}
If the step-sizes are given by \eqref{eq:adaptive-step-mixed-delay-and-data}, then \dedaexp satisfies
\begin{align}
    \sum_{t=1}^{T} \left(\EE{\ell_{t,A_t}} - \ell_{t, A^\star}\right)
    \le 
    C_T
    +
    c
    \sqrt{
    \log(K)
    \sum_{t = 1}^{T}
     \left(\sum_{s \in O_t \cup D_t} \EE{\ell_{s,A_s}\ell_{t,A_s}}
    + \sum_{i=1}^K \ell_{t,i}
    \right)    }
    \, ,
    \label{eq:delay-data-adaptive-bound-aaai_f}
\end{align}
where $C_T =
    4 (d^\star_T)^2
    + 6 d^\star_T
    + 2$,
    $c = 2+\sqrt{2}$,
    and
    $D_t = \{s: t \in O_s\}$ 
    is the set of time steps at which the feedback $\ell_{t,A_t}$ is delayed.
\end{theorem}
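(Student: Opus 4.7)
I would mirror the proof of Theorem~\ref{thm:simple}, modified in two ways: (i) since \dedaexp uses the IX estimator~\eqref{eq:ellhat-IX}, I add the standard implicit-exploration bias $\EE{\sum_t \gamma_t \sum_i \ell_{t,i}}$ (handled in expectation, as in the first step of the proof of Theorem~\ref{thm:simple-high-prob}), and (ii) instead of the explicit $\sqrt{tK+\sum\tau_s}$ tuning, I use the AdaGrad-style schedule~\eqref{eq:adaptive-step-mixed-delay-and-data} and exploit Lemma~\ref{lem:step-size-control}. Introducing the cheating iterate $\tl p_{t+1}$ as before, the cheating-regret term is bounded by $\EE{\eta_T^{-1}}\log K$ via~\eqref{eq:full-info-regret}, and the drift is bounded by $\EE{\sum_t \eta_t \sum_i \lfwd_{t,i}}$ by the same calculation that produced~\eqref{eq:basic-bound-on-delay-penalty}, since that derivation only used non-negativity of losses and the exponential-weights formula and is independent of the choice of loss estimator.

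The core estimate is a data-adaptive bound on $\sum_t \eta_t \sum_i \lfwd_{t,i}$. Writing $A_t=\sum_i\lfwd_{1:t,i}$ and $B_t=\sum_i\sum_{s:s+d_s<t}\lbck_{s,i}$, and using $\gamma_t=\eta_t$, inequality~\eqref{eq:lfwd-bound} gives $A_t\le B_t+C_t'\eta_t^{-1}$ with $C_t'=4(d^\star_t)^2+6 d^\star_t+2$. Substituting the explicit form $\eta_t^{-1}=C_t'/\log K+\sqrt{B_t/\log K}$ into both sides and comparing term-by-term after squaring yields $(\eta_t^{-1})^2\ge A_t/\log K$, hence $\eta_t\le\sqrt{\log(K)/A_t}$. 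Since $A_t$ is non-decreasing in $t$, the standard telescoping lemma $\sum_t (A_t-A_{t-1})/\sqrt{A_t}\le 2\sqrt{A_T}$ gives $\sum_t \eta_t(A_t-A_{t-1})\le 2\sqrt{\log(K) A_T}$. Plugging in $A_T\le B_T+C_T'\eta_T^{-1}$ and the explicit form of $\eta_T^{-1}$ folds all $C_t'$-dependent contributions into the additive $C_T$ of the theorem, leaving a main term proportional to $\sqrt{\log(K) B_T}$; adding the cheating-regret term $\eta_T^{-1}\log K = C_T'+\sqrt{B_T\log K}$ and tracking constants via AM-GM produces the stated $c=2+\sqrt 2$.

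To turn $B_T$ into the data-adaptive quantity in~\eqref{eq:delay-data-adaptive-bound-aaai_f}, I apply~\eqref{eq:lbck-bound} to upper-bound $B_T$ by $2\sum_t \sum_{s\in O_t\cup\{t\}\cup D_t}\sum_i \hell_{s,i}\hell_{t,i} p_{t,i}$ and take expectations. For $s\neq t$, conditioning on $\cH_{\max(s,t)}$ makes $I_{s,i}$ and $I_{t,i}$ conditionally independent with means $p_{s,i}$ and $p_{t,i}$, and the IX factor $I_{\cdot,i}/(p_{\cdot,i}+\gamma_\cdot)\le I_{\cdot,i}/p_{\cdot,i}$ cancels one probability; because $\lbck_{s,i}$ contains a symmetric $p_{j,i}+p_{s,i}$ factor, the pair reduces (after summing the residual $\sum_i I_{\cdot,i}\ell_{s,i}\ell_{t,i}$) to $\EE{\ell_{s,A_s}\ell_{t,A_s}}$. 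The diagonal $s=t$ contribution, together with the $\hell_{t,i}^2 p_{t,i}$ half of $\lfwd_{t,i}$ and the IX bias $\gamma_t\sum_i\ell_{t,i}$, supplies the $\sum_i \ell_{t,i}$ term under the square root. Applying Jensen's inequality to pull the square root outside the expectation gives~\eqref{eq:delay-data-adaptive-bound-aaai_f}.

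\textbf{Main obstacle.} The delicate point is the reconciliation in the second paragraph: the step-size is defined via the backward-looking $\lbck$ quantities (the only ones available at decision time), whereas both the cheating-regret and the drift bound are naturally forward-looking $\lfwd$ quantities, and Lemma~\ref{lem:step-size-control} has to bridge this gap without a blow-up. In particular, one must verify that the extra $C_t'\eta_t^{-1}$ penalty coming from~\eqref{eq:lfwd-bound} is exactly absorbed by the additive $C_t'/\log K$ piece that was deliberately built into $\eta_t^{-1}$, so that the whole $d^\star$-dependence stays outside the square root and appears only through the additive $C_T$ rather than contaminating the $\sqrt{\log(K)(\cdots)}$ main term.
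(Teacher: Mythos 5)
Your proposal follows essentially the same route as the paper's proof: the cheating-iterate decomposition of Theorem~\ref{thm:simple}, the drift bound $\sum_t\eta_t\sum_i\lfwd_{t,i}$ from \eqref{eq:basic-bound-on-delay-penalty}, the verification via \eqref{eq:lfwd-bound} that the explicit schedule satisfies $\eta_t\le\sqrt{\log(K)/\sum_i\lfwd_{1:t,i}}$, the AdaGrad lemma, the use of \eqref{eq:lbck-bound}, and the conditional-independence computation of the expectations given $\cH_{\max\{s,t\}}$. Two deviations are worth flagging. First, for the drift term the paper applies the AdaGrad lemma directly to $\sum_i\lfwd_{1:T,i}$ and then expands $\lfwd_{1:T,i}=\sum_t\sum_{s\in O_t\cup\{t\}}\hell_{s,i}\hell_{t,i}p_{t,i}$, which is bounded by the symmetrized sum over $O_t\cup\{t\}\cup D_t$ \emph{without} a factor of $2$; the factor-$2$ inequality \eqref{eq:lbck-bound} is reserved for the cheating term $\eta_T^{-1}\log K$ alone, and this is exactly what produces $c=2+\sqrt{2}$. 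Your detour through $A_T\le B_T+C_T'\eta_T^{-1}$ reintroduces a cross term $\sqrt{\log(K)\,C_T'\,\eta_T^{-1}}$ whose AM--GM split does not fold entirely into the additive constant but also contaminates the main term, and combined with the factor $2$ from \eqref{eq:lbck-bound} it would yield roughly $c=4\sqrt{2}$ and an additive term of order $4C_T$; so the stated constants would not come out of your accounting, though the form of the bound is preserved. Second, you explicitly track the IX bias $\sum_t\gamma_t\sum_i\ell_{t,i}$, which the paper's proof does not (it silently reuses the decomposition \eqref{eq:regret-decomposition} that was derived for the unbiased estimator), but your claim that this bias is absorbed ``under the square root'' together with the diagonal terms is not justified as stated: $\sum_i\ell_{t,i}$ is not an increment of $\sum_i\lfwd_{1:t,i}$, so $\gamma_t=\eta_t\le\sqrt{\log(K)/\sum_i\lfwd_{1:t,i}}$ does not let you telescope it, and a separate argument (or including $\sum_i\ell_{s,i}$ in the step-size denominator) would be needed. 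Neither point changes the structure of the argument, but both affect whether the theorem's exact constants are recovered.
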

\begin{remark} \em
A bound that depends on $L_{T,A^*}$ instead of $\sum_i L_{T,i}$ would be preferable, but to our knowledge such bounds are not available for \ExpThree even in the non-delayed case and require other techniques (such as using the log-barrier regularizer), and even to get $\sum_i L_{T,i}$ in the \emph{final} bound requires some form of data-adaptive tuning (which in our case is further complicated by the delays).
Nevertheless, the above bound still preserves the separation of the cumulative delay and $K$. Using that $s \in D_t$ is equivalent to $t \in O_s$, we can rewrite the square-root term in the above bound as
\begin{align}
\label{eq:delay-data-rewrite}
\sqrt{\log(K)\sum_{t=1}^T
\left(
\sum_{s=t+1}^{t+d_t}
(\EE{\ell_{s,A_s}\ell_{t,A_s} + \ell_{s,A_t}\ell_{t,A_t}})
+ \sum_{i=1}^K \ell_{t,i}\right)}~.
\end{align}
Bounding the losses by $1$, this gives a regret upper bound
\[
R_T \le
C_T + c \sqrt{\log(K) \left(KT + 2\sum_{t=1}^T d_t\right)}~.
\]
However, $\sum_i L_{T,i}$ can be much smaller than $KT$, e.g., $\sum_i L_{T,i} \ll KT$ when most arms have a small loss or when the actual loss range is $[0,B]$ for some unknown $B\ll 1$ (i.e., the algorithm adapts to the unknown $B$, and the final bound depends only on $B$ and $d_t$, not on the algorithms' choices). Finally, we can obtain a novel bound by exploiting the dependence of the bound in \eqref{eq:delay-data-rewrite} on the algorithm's choices: with 
$c' = c \sqrt{\log(K)}$, we have
\vspace{-0.1cm}
\begin{align*}
    R_T
    &
    \le
    C_T
    +
    c'
    \sqrt{
    \sum_{t = 1}^{T}
    \left(
     2 d^\star_T \EE{\ell_{t,A_t}}
    + \sum_{i=1}^K \ell_{t,i}
    \right)}
    \\
    &
    \le
    C_T
    +
    c'
    \sqrt{
    2 d^\star_T
    (R_T + L_{T,A^*})
    }
    +
    c'
    \sqrt{
    \sum_{t = 1}^{T}
    \sum_{i=1}^K \ell_{t,i}
    }
    \\[-0.4em]
    &
    \le
    C_T
    +
    2
    c'
    \sqrt{
    \frac{d^\star_T}{2}
    R_T
    }
    +
    2
    c'
    \sqrt{
    \frac{d^\star_T}{2}
    L_{T,A^*}
    }
    +
    c'
    \sqrt{
    \sum_{t = 1}^{T}
    \sum_{i=1}^K \ell_{t,i}
    }
    \\
    &\
    \le
    C'_T
    +
    \frac{R_T}{2}
    +
    2
    c'
    \sqrt{
    \frac{d^\star_T}{2}
    L_{T,A^*}
    }
    +
    c'
    \sqrt{
    \sum_{t = 1}^{T}
    \sum_{i=1}^K \ell_{t,i}
    }~,\\[-1.5em]
\end{align*}
where $C'_T =  C_T + c'^2 d^\star_T$, and the last step uses $2\sqrt{ab} \le a + b$. Hence, \emph{the effect of the delay on the regret scales only with the loss of the optimal arm}: 
\[
R_T \le 2 C'_T + 4c'\sqrt{d^\star_T L_{T,A^*}/2} + 2 c'\sqrt{\sum_{i=1}^K L_{T,i}}.
\]
\end{remark}

\begin{proof}[Proof of Theorem~\ref{thm:deda-exp3}]
The proof follows the same lines as the proof of Theorem~\ref{thm:simple}, but instead of bounding the losses by their maximum value, we approximate the adaptive data-dependent step-size that needs to be used to obtain a data-adaptive bound.

First, it can be easily seen that with $\gamma_t$ and $\eta_t$ as defined, we have%
\footnote{Selecting $\eta_t=\gamma_t$ to make the first inequality can be achieved by $\eta_t^{-1}=\frac
    {4 (d^\star_t)^2 + 6 d^\star_t + 2}{2\log(K)}+\sqrt{\frac{(4 (d^\star_t)^2 + 6 d^\star_t + 2)^2}{4\log^2(K)} + \frac{\sum_{i=1}^{K}
    \sum_{s+d_s < t} \lbck_{s, i}}{\log(K)}}$,
    which would yield a slightly better but slightly uglier bound in the theorem. Our current choice of $\eta_t$ is a little smaller than the inverse of this value, hence the inequality.}
\begin{align*}
    \eta_t
    &
    \le
    \sqrt{
    \log(K)
    \left(
    \frac
    {4 (d^\star_t)^2 + 6 d^\star_t + 2}
    {\gamma_t}
    +
    \sum_{i=1}^{K}
    \sum_{s+d_s < t} \lbck_{s, i}
    \right)^{-1}
    }
    \le
    \sqrt{
    \log(K)
    \left(\sum_{i=1}^{K}
    \lfwd_{1:t,i}\right)^{-1}
    }
    \,
    ,
\end{align*}
where the second step follows from Lemma~\ref{lem:step-size-control}
.
Combining this with \eqref{eq:basic-bound-on-delay-penalty} and using the well-know AdaGrad lemma (e.g., \citep[Lemma~4]{mcmahan2017survey}),
we obtain
\begin{align*}
    \sum_{t=1}^{T}
    {\sum_{i=1}^{K} \hell_{t,i}^\top p_{t,i} \left(1 - \frac{\tl{p}_{t+1,i}}{p_{t,i}} \right)}
    &\le
    {\sum_{t=1}^{T}
    \sum_{i=1}^{K} 
    \eta_t \lfwd_{t,i}
    } \\
    &\le
    2 {
    \sqrt{
    \log(K)
    \sum_{i=1}^{K} \lfwd_{1:T,i}
    }
    }
    =
    2 {
    \sqrt{
    \log(K)
    \sum_{i=1}^{K}
    \sum_{t = 1}^{T}
    \sum_{s \in O_t \cup \{t\}}
    \hell_{s,i}
    \hell_{t,i}
    p_{t,i}}
    }
    \\
    & 
    \le
    2 {
    \sqrt{
    \log(K)
    \sum_{i=1}^{K}
    \sum_{t = 1}^{T}
    \sum_{s \in O_t \cup \{t\} \cup D_t}
    \hell_{s,i}
    \hell_{t,i}
    p_{t,i}}
    }
    \,
    .
\end{align*}

Next, from \eqref{eq:full-info-regret}, we have
\begin{align*}
    {\sum_{t=1}^{T} \hell_{t}^\top (\tl{p}_{t+1} - p^\star)} 
    &\le 
    \log(K)
    \left(
    \frac
    {
    4 (d^\star_T)^2
    + 6 d^\star_T
    + 2
    }
    {\log(K)}
    +
    \sqrt{
    \frac
    {\sum_{i=1}^{K}
    \sum_{s+d_s < T} \lbck_{s, i}
    }
    {\log(K)}
    }
    \right)
    \\
    &
    =
    C_T
    +
    {
    \sqrt{
    \log(K)
    \sum_{i=1}^{K}
    \sum_{s+d_s < T} \lbck_{s, i}
    }
    }
    \\
    &
    \le
    C_T
    +
    {
    \sqrt{
    2
    \log(K)
    \sum_{i=1}^{K}
    \sum_{t = 1}^{T}
    \sum_{s \in O_t \cup \{t\} \cup D_t}
    \hell_{s,i}
    \hell_{t,i}
    p_{t,i}
    }
    }
    \, ,
\end{align*}
where the third step follows by Lemma~\ref{lem:step-size-control}. Putting everything together, taking expectation and moving it inside the square root by Jensen's inequality, we obtain
\begin{align}
    \sum_{t=1}^{T} \left(\EE{\ell_{t,A_t}} - \ell_{t, A^\star}\right)
    \le 
    C_T
    +
    c
    \sqrt{
    \log(K)
    \sum_{t = 1}^{T}
    \sum_{s \in O_t \cup \{t\} \cup D_t}
    \EE{
    \sum_{i=1}^{K}
    \hell_{s,i}
    \hell_{t,i}
    p_{t,i}}
    }
    \, .
    \label{eq:delay-data-adaptive-bound-aaai}
\end{align}
What remains is to work out the expectations. To do this, notice that in our algorithm any action only affects another future action if the corresponding feedback arrives before the second action is taken. Therefore, whenever $s \in O_t \cup D_t$, the indicators $I_{s,i}$ and $I_{t,i}$ are independent given $\cH_{\max\{s,t\}}$, and for such $t$ and $s$,
\vspace{-0.2cm}
\begin{align*}
\EE{\sum_{i=1}^{K} \hell_{s,i} \hell_{t,i} p_{t,i}} 
&\le
\EE{\sum_{i=1}^{K} 
(\ell_{s,i}\ell_{t,i} I_{t,i})
\frac{I_{s,i}}{p_{s,i}}}
= \EE{\sum_{i=1}^{K} \ell_{s,i}\ell_{t,i} \EE{I_{t,i} \frac{I_{s,i}}{p_{s,i}} \Big| \cH_{\max\{s,t\}}}} \\
& = \EE{\sum_{i=1}^{K} \ell_{s,i}\ell_{t,i} p_{t,i}}
= \EE{\ell_{s,A_t}\ell_{t,A_t}}
\,,
\end{align*}
where the first step follows by the definition of $\hat{\ell}_j$, the second by the definition of $I_{t,i}$ and the tower rule, 
and the third by the fact that $\EE{I_{t,i} I_{s,i}/{p_{s,i}} \big| \cH_{\max\{s,t\}}} = \EE{I_{t,i} | \cH_{\max\{s,t\}}} \EE{I_{s,i} | \cH_{\max\{s,t\}}} / p_{s,i} = p_{t,i}$ whenever $s \in O_t \cup D_t$.
Furthermore, when $s=t$, the corresponding term is
$\EE{\hell_{t,i}^2 p_{t,i}}=\EE{\ell_{t,i}^2 I_{t,i}/p_{t,i}} \le \ell_{t,i}. 
$
Substituting these in \eqref{eq:delay-data-adaptive-bound-aaai} completes the proof.
\end{proof}

\section{Conclusions}

In this paper we presented delay- and data-adaptive algorithms for the multi-armed bandit problem with delayed feedback. 
First, through a remarkably simple proof technique, we showed that the expected regret of our simpler algorithm, \adaexp, scales optimally with the sum of the delays, up to logarithmic factors (without any advance knowledge of the delays). We also showed that if the  implicit-exploration loss estimate of \citet{neu2015explore} is used, \adaexp achieves the same  near-optimal regret guarantees with high probability, providing the first high-probability regret bound in the literature for a fully delay-adaptive bandit algorithm.

One problem with the regret bounds that scale with the sum of the delays is that they become too large when individual delays are large, for example, a single delay of $T$ has a significant impact on the regret bound. Recently, \citet{thune19} addressed this question by ``skipping'' rounds with large delays, significantly reducing the regret. However, to achieve this, they needed to know the delays at action-time. \citet{ZiSe19} provided a delay-adaptive solution for this problem. Based on the latter result, we proved similar ``skipping'' regret bounds for modified versions of \adaexp, both in expectation and with high probability.

Finally, we presented the \dedaexp algorithm, the first method for delayed bandits that, besides the delays, also adapts to the losses, achieving a potentially large improvement on easy problems. While for \adaexp, a bound on the expected regret was possible with the standard importance-weighting loss estimator, and the estimator based on implicit exploration was only needed for the high probability bound, employing the latter in \dedaexp is crucial for being able to to control $\eta_t$ properly, and hence for obtaining a meaningful regret bound even in expectation.
Deriving high-probability regret bounds and extending the skipping technique to \dedaexp is left for future work. Solving these problems require some innovations: For the first one, new results concerning the concentration of  products of certain loss estimates are needed. The issue with the second problem is that the natural data-dependent variant of the skipping decision (rather than the version used together with \adaexp, which only depends on the delays, but not on the observed losses) induces a complicated dependence on past actions, significantly complicating the simple deterministic skipping mechanism which we used and analyzed for \adaexp.

\section*{Acknowledgements}

The authors would like to thank Tor Lattimore for his insightful comments.
 
\bibliographystyle{plainnat}
\bibliography{refs,refsaaai}

\newpage
\appendix

\section{Proof of Lemma~\ref{lem:step-size-control}}
\label{sec:proof-of-lemma}
\begin{proof}
We start by fixing $t \in [T]$ and expanding the sum $\lfwd_{1:t,i}$:
\begin{align*}
    \lfwd_{1:t, i}
    & =
    \sum_{j = 1}^{t}
    \sum_{s = 1}^{t}
    \I{s \le j \le s+d_s}
    \hell_{s,i}
    \hell_{j,i}
    p_{j,i}
    \\
    & =
    \sum_{s = 1}^{t}
    \sum_{j = 1}^{t}
    \I{s \le j \le s+d_s \le j + d_j}
    \hell_{s,i}
    \hell_{j,i}
    p_{j,i}
    +
    \sum_{s = 1}^{t}
    \sum_{j = 1}^{t}
    \I{s \le j \le j + d_j < s+d_s}
    \hell_{s,i}
    \hell_{j,i}
    p_{j,i}
    \\
    & =
    \sum_{s = 1}^{t}
    \sum_{j = 1}^{t}
    \I{j \le s+d_s \le j + d_j}
    \hell_{s,i}
    \hell_{j,i}
    p_{j,i}
    +
    \sum_{s = 1}^{t}
    \sum_{j = 1}^{t}
    \I{s \le j + d_j \le s+d_s}
    \hell_{s,i}
    \hell_{j,i}
    p_{j,i}
    - S_{t,i}
    \,
    ,
\end{align*}
where 
\begin{align*}
    S_{t,i} 
    &= 
    \sum_{s = 1}^{t}
    \sum_{j = 1}^{t}
    \I{s > j, j \le s+d_s \le j + d_j}
    \hell_{s,i}
    \hell_{j,i}
    p_{j,i}
    +
    \sum_{s = 1}^{t}
    \sum_{j = 1}^{t}
    \I{s > j, s \le j + d_j < s+d_s}
    \hell_{s,i}
    \hell_{j,i}
    p_{j,i}
    \\
    &\qquad
    +
    \sum_{s = 1}^{t}
    \sum_{j = 1}^{t}
    \I{j + d_j = s+d_s}
    \hell_{s,i}
    \hell_{j,i}
    p_{j,i}
    \,
    .
\end{align*}
Moving $S_{t,i}$ to the left,
\begin{align*}
    \lfwd_{1:t, i}
    + S_{t,i}
    & =
    \sum_{s = 1}^{t}
    \sum_{j = 1}^{t}
    \I{j \le s+d_s \le j + d_j}
    \hell_{s,i}
    \hell_{j,i}
    p_{j,i}
    +
    \sum_{s = 1}^{t}
    \sum_{j = 1}^{t}
    \I{s \le j + d_j \le s+d_s}
    \hell_{s,i}
    \hell_{j,i}
    p_{j,i}
    \\
    & =
    \sum_{s = 1}^{t}
    \sum_{j = 1}^{t}
    \I{s \le j+d_j \le s + d_s}
    \hell_{s,i}
    \hell_{j,i}
    (p_{j,i} + p_{s,i})
    \\
    & =
    \sum_{s = 1}^{t}
    \sum_{j = 1}^{t}
    \I{s \le j+d_j \le s + d_s < t}
    \hell_{s,i}
    \hell_{j,i}
    (p_{j,i} + p_{s,i})
    \\
    &\qquad
    +
    \sum_{s = 1}^{t}
    \sum_{j = 1}^{t}
    \I{s \le j+d_j \le s + d_s, s+d_s \ge t}
    \hell_{s,i}
    \hell_{j,i}
    (p_{j,i} + p_{s,i})
    \\
    &=
    \sum_{s: s+d_s < t}
    \sum_{j: s \le j+d_j \le s+d_s}
    \hell_{s,i}
    \hell_{j,i}
    (p_{j,i} + p_{s,i})
    \\
    &\qquad
    +
    \sum_{s = 1}^{t}
    \sum_{j = 1}^{t}
    \I{s \le j+d_j \le s + d_s, s+d_s \ge t}
    \hell_{s,i}
    \hell_{j,i}
    (p_{j,i} + p_{s,i})
    \,
    ,
\end{align*}
where the second step follows by swapping the names of $s$ and $j$ in the first sum on the r.h.s. Therefore,
\begin{align}
    \lfwd_{1:t, i}
    + S_{t,i}
    & =
    \sum_{s: s+d_s < t}
    \lbck_{s,i}
    +
    M_{t,i}
    \label{eq:step-control-main-inequality}
    \,
    ,
\end{align}
where $M_{t,i} = \sum_{s = 1}^{t}
\sum_{j = 1}^{t}
\I{s \le j+d_j \le s + d_s, s+d_s \ge t}
\hell_{s,i}
\hell_{j,i}
(p_{j,i} + p_{s,i})$. 

To continue, note that $M_{t,i}$ and $S_{t,i}$ are non-negative. Hence, to get the results of the lemma it remains to bound the terms $M_{t,i}$ and $S_{t,i}$ from above.
To that end, note 
\begin{align*}
    S_{t,i} &= 
    \sum_{s = 1}^{t}
    \sum_{j = 1}^{t}
    \I{s > j, s \le j+d_j}
    \hell_{s,i}
    \hell_{j,i}
    p_{j,i}
    +
    \sum_{s = 1}^{t}
    \sum_{j = 1}^{t}
    \I{j + d_j = s+d_s}
    \hell_{s,i}
    \hell_{j,i}
    p_{j,i}    \,
    \\
    &\le
    \sum_{j = 1}^{t}
    \sum_{s: j \in O_s}
    \hell_{s,i}
    \hell_{j,i}
    p_{j,i}
    +
    \sum_{j = 1}^{t}
    \sum_{s \in O_j \cup \{j\} \cup D_j}
    \hell_{s,i}
    \hell_{j,i}
    p_{j,i}
    \,
    ,
\end{align*}
where in the first step we have 
merged the first two summations, and split the last one, in the definition of $S_{t,i}$. The second step then follows from using the definition of $O_s$ for the first term, and noting for the second term that either $j =s $, or $j < s$ (which, together with $j + d_j = s+d_s \ge s > j$, implies $j \in O_s$) or $j > s$ (which, together with $s+d_s = j+d_j \ge j > s$ implies $s \in O_j$).

Combining with the definition of $\lfwd_{1:t,i}$ and recalling the definition of $D_j$, we obtain
\begin{align*}
    \sum_{s: s+d_s < t}    \lbck_{s,i}
    & \le
    \sum_{s: s+d_s < t}    \lbck_{s,i}
    + M_{t,i}
    \\
    &
    =
    \lfwd_{1:t,i}
    +
    S_{t,i}
    \\
    &
    \le
    \lfwd_{1:t,i}
    +
    \sum_{j = 1}^{t}
    \sum_{s \in D_j}
    \hell_{s,i}
    \hell_{j,i}
    p_{j,i}
    +
    \sum_{j = 1}^{t}
    \sum_{s \in O_j \cup \{j\} \cup D_j}
    \hell_{s,i}
    \hell_{j,i}
    p_{j,i}
    \\
    &
    =
    2
    \sum_{j = 1}^{t}
    \sum_{s \in O_j \cup \{j\} \cup D_j}
    \hell_{s,i}
    \hell_{j,i}
    p_{j,i}
    \,.
\end{align*}
This concludes the proof of the first inequality \eqref{eq:lbck-bound}.

To bound $M_{t,i}$, note that the summations run up to $t$, and the conditions
$\I{s+d_s \ge t}$
and
$\I{s \le j+d_j}$
imply, respectively, that the value of the sum is zero for $s < t-d^\star_t$ and $j < t - 2d^\star_t$ (since $d_j$ and $d_s$ are at most $d^\star_t$). Hence, we have
\begin{align*}
    \sum_{i=1}^{K} M_{t,i}
    &\
    \le
    \sum_{s = \max\{1, t-d^\star_t\}}^{t}
    \sum_{j = \max\{1, t-2 d^\star_t\}}^{t}
    \sum_{i=1}^{K}
    \left(
    \frac{\ell_{s,i} I_{s,i}}{p_{s,i}+\gamma_s}
    \ell_{j,i} I_{j,i}
    + 
    \ell_{s,i} I_{s,i}
    \frac{\ell_{j,i} I_{j,i}}{p_{j,i}+\gamma_j}    \right)
    \\
    &\
    \le
    \sum_{s = \max\{1, t-d^\star_t\}}^{t}
    \sum_{j = \max\{1, t-2 d^\star_t\}}^{t}
    \sum_{i=1}^{K}
    \left(
    \frac{I_{j,i} I_{s,i}}{\gamma_t}
    + 
    \frac{I_{s,i} I_{j,i}}{\gamma_t}
    \right)
    \,
    ,
\end{align*}
using the fact that the losses are non-negative and upper-bounded by $1$, the definition of $\hell_k, k \in [T]$, and the fact that $\gamma_t$ is a non-increasing sequence in $t$.
Hence, $\sum_{i=1}^{K} M_{t,i} \le (4 {d^\star_t}^2+6d^\star_t+2) / \gamma_t$. Putting back in \eqref{eq:step-control-main-inequality} and summing over $i$ completes the proof of the second inequality \eqref{eq:lfwd-bound}.
\end{proof}

\end{document}